\documentclass{article}

\usepackage{iclr2027_conference}
\usepackage{times}

\usepackage[utf8]{inputenc}

\usepackage{amsmath}
\usepackage{amsthm}
\usepackage{thmtools} %
\usepackage{mleftright}

\makeatletter
\renewenvironment{proof}[1][\proofname]{\par
  \vspace{-\parskip}%
  \vspace{-\topsep}%
  \pushQED{\qed}%
  \normalfont
  \trivlist
  \item[\hskip\labelsep
    \itshape
    #1\@addpunct{.}]\ignorespaces
}{%
  \popQED\endtrivlist\@endpefalse
}
\makeatother

\usepackage{hyperref}
  \hypersetup{hidelinks}
\usepackage[capitalize,nameinlink]{cleveref}
  \crefname{section}{Sec.}{Secs.}
  \crefname{appendix}{App.}{Apps.}
  \crefname{table}{Tab.}{Tabs.}
  \crefname{figure}{Fig.}{Figs.}
  \crefname{equation}{Eq.}{Eqs.}
  \crefname{definition}{Def.}{Defs.}
  \crefname{theorem}{Thm.}{Thms.}
  \crefname{lemma}{Lem.}{Lems.}
  \crefname{proposition}{Prop.}{Props.}
  \crefname{corollary}{Cor.}{Cors.}
  \crefname{claim}{Clm.}{Clms.}

\usepackage{doi}

\newtheorem{theorem}{Theorem}[section]
\newtheorem{lemma}[theorem]{Lemma}
\newtheorem{corollary}[theorem]{Corollary}
\newtheorem{proposition}[theorem]{Proposition}

\theoremstyle{definition}
\newtheorem{definition}[theorem]{Definition} %

\theoremstyle{plain}

\usepackage{amssymb}
\usepackage{mathrsfs}
\usepackage{bm}
\usepackage{cancel}
\usepackage{xcolor}
\usepackage{tikz}
  \usetikzlibrary{positioning}
  \usetikzlibrary{arrows}

\usepackage{pifont}

\usepackage{graphicx}
\graphicspath{ {./images/} }

\usepackage{wasysym}
\usepackage{mathdots}
\usepackage{array}

\usepackage{bbding}
\usepackage{dsfont}

\usepackage{enumitem}
  \setlist{nosep}

\usepackage{natbib}

\usepackage{booktabs}

\usepackage{comment}

\usepackage{framed}
\usepackage{mdframed}

\newcommand{\N}{\mathbb{N}}

\newcommand{\R}{\mathbb{R}}

\newcommand{\macro}[1]{#1}
\newcommand{\paramindices}{K} %
\newcommand{\paramvars}{\macro{\bar{y}}}
\newcommand{\paramvals}{{\macro{\bar{\theta}}}}
\newcommand{\numnewvars}{\kappa}

\newcommand{\sym}[1]{\textnormal{\texttt{#1}}}

\newcommand{\elt}[1]{_{#1}}

\newcommand{\restrict}[2]{\ensuremath{\left.#1\right|_{#2}}}

\newcommand{\VC}{\textnormal{VC}}

\newcommand{\dkey}{d}
\newcommand{\dffn}{d}

\newcommand{\ino}{\in}

\newcommand{\DC}[1]{\textcolor{red}{DC: #1}}

\title{VC Dimension and Expressivity of \\ Real-Valued Transformers}
\author{%
Gavin Dooley, Andy Yang, Yijia Jessica Zhu, David Chiang, Peter Cholak, Anand Pillay  \\
University of Notre Dame \\
\texttt{\{gdooley,ayang4,yzhu26,dchiang,cholak,apillay\}@nd.edu}}

\begin{document}
\iclrfinalcopy
\maketitle

\begin{abstract}
Whereas previous results on abilities and limitations of transformers have restricted the definition of transformers in various ways, here we study softmax-attention, multi-layer transformers operating on real values, with very few additional assumptions. Applying results from real geometry, we obtain upper bounds on the VC dimension and split VC dimension of such transformers ($O(n^4)$ and $O(n^6)$, respectively, where $n$ is the input length). Conversely, we also construct specific transformers witnessing lower bounds on these quantities ($\Omega(n)$ in each case). These results have some notable consequences. For example, within the class of symmetric (permutation-invariant) functions, we show that transformers can uniformly express all functions over an alphabet of one symbol and non-uniformly express all functions over an alphabet of two symbols, but cannot (even non-uniformly) express some functions over an alphabet of six symbols. We also prove limitations on how many bits of a real number a transformer can access.
\end{abstract}

\begin{figure}[b!]
\centering
\small
\begin{tikzpicture}[x=2in,y=0.7in]
\tikzset{every node/.append style={align=center,text width=1.3in}}
\begin{scope}[yshift=1.3in]
\node(warren) at (0,0) {\citet{Warren1968}: \\ polynomials};
\node(gj) at (1,0.35) {\citet{GJ1995}: \\ polynomials};
\node(koz25) at (2,0.35) {\citet{Kozachinskiy2025}: \\ one-layer SMATs};
\node(yang) at (2,-0.35) {\citet{YangSrebroLi2026}: \\ %
AHATs};
\end{scope}
\node(khov) at (0,0.4) {\citet{Khovanskii1991} (our \cref{Khovanskii_theorem}): \\ exponential--linear polynomials};
\node(km2)[text width=1.25in] at (0.5,-0.8) {\citet{KM1997}, Thm.~2 (our~\cref{thm:km1997})};
\node(km7)[text width=1.25in] at (1.1,0.4) {\cite{KM1997}, Thm.~7: FFNNs};
\node(ks) at (2,0.8) {\citet{KOIRAN199863}: RNNs};
\node(koz26) at (2,0) {\citet{Kozachinskiy2026}: RNNs, %
one-layer SMATs};
\node(ours) at (2,-0.8) {\bf This work: \\ multi-layer SMATs};
\begin{scope}[->,>=latex]
\draw (warren) to (gj);
\draw (gj) to (koz25);
\draw (warren) to (yang);
\draw (khov) to (km7);
\draw (khov) to (ours);
\draw (km2) to (km7);
\draw (km7) to (koz26);
\draw (km7) to (ks);
\draw (km2) to (ours);
\end{scope}
\end{tikzpicture}
\caption{Relationship to previous work. FFNN = feedforward neural network, RNN = recurrent neural network, %
AHAT = average-hard attention transformer, SMAT = softmax-attention transformer.
}
\label{fig:overview}
\end{figure}

\section{Introduction}

Previous results on abilities and limitations of  transformers have imposed restrictions of various kinds: on attention (e.g., hard attention instead of soft attention \citep{hahn-2020-theoretical,Perez2021,YangSrebroLi2026}), on numerical precision (e.g., $O(\log n)$ bits where $n$ is the input sequence length \citep{merrill-sabharwal-2023,chen-etal-2025}), or on the number of layers \citep{Kozachinskiy2025,Kozachinskiy2026}. In addition, previous results on limitations of transformers have relied on complexity-theoretic conjectures, most notably $\mathsf{TC}^0 \ne \mathsf{NC}^1$. Indeed, \citet{Kozachinskiy2026} state that it is still an open problem to find any particular task that is not expressible by multi-layer infinite-precision transformers.

In this paper, we study multi-layer softmax-attention transformers (SMATs) operating on real values. Although it has sometimes been argued that some real-valued neural networks can compute any function \citep{siegelmann+sontag:1994,merrill-etal-2022-saturated}, and indeed we prove that real-valued transformers can solve some undecidable problems (\cref{prop:symmetric-size-two,thm:undecidable}), we also prove, to our knowledge for the first time, that there are limitations on their expressivity.

As a real-valued neural network can be regarded as a composition of real functions (possibly preceded or followed by discrete functions like a word or position embedding),
results from real geometry 
have proved useful for studying the limitations of these networks.
\Citet{GJ1995} applied a quantitative bound of \citet{Warren1968} on the number of connected components in a set defined using polynomials to bound the VC dimension of functions defined using polynomials, including many neural networks. 
Although transformers are not polynomials, these results can be applied to transformers in limited ways: \Citet{Kozachinskiy2025} used \citeauthor{GJ1995}'s bound to obtain an inexpressibility result for one-layer transformers, and \citet{YangSrebroLi2026} used Warren's bound to prove VC dimension bounds for hard-attention transformers. %

\Citet{KM1997} extended \citeauthor{GJ1995}'s bound to 
a much broader class of functions. Among other things, they used their theorem, together with a quantitative bound of \citet{Khovanskii1991} on the connected components of a set defined by exponential--linear polynomials,
to bound the VC dimension of feedforward neural networks (FFNNs). Later work applied this result to other kinds of neural networks by (at least indirectly) simulating them by FFNNs: \citet{KOIRAN199863}, to bound the VC dimension of recurrent neural networks (RNNs), and \citet{Kozachinskiy2026}, to prove inexpressibility results for RNNs, state-space models, linear-attention transformers, and one-layer SMATs.

No such simulation by FFNNs is possible in the case of general transformers, which means that analogous bounds for general transformers must directly appeal to the main theorem of \citet{KM1997} in its full generality instead of going through their bound for FFNNs. In this paper, we do just that to obtain a VC dimension bound of $O(n^4)$ and a split VC dimension bound of $O(n^6)$ for multi-layer SMATs, where $n$ is the input length.  Applying this general result entails, as observed by \citet{Kozachinskiy2026},  ``significant additional technical work.'' %
(See \cref{fig:overview} for a summary of how this paper relates to previous work.)

Some particular consequences of these results are:
\begin{itemize}
\item There exist symmetric functions over an alphabet of six symbols that cannot be expressed by any real-valued transformer.
\item Real-valued transformers cannot extract the $k$-th bit of an inputed number if $k \in \omega(n^4)$. Intuitively, although real-valued transformers are allowed infinite precision, they cannot do much with that precision beyond $O(n^4)$ bits.
\end{itemize}

We also provide examples of tasks that transformers can express, complementing the impossibility results summarized above.
\begin{itemize}
    \item There exists a transformer that can compute any symmetric function on an alphabet of two symbols (with a different positional encoding for each symmetric function) (\cref{prop:symmetric-size-two}).
    \item There exists a transformer (with one layer) that can look up the value of the bit at a queried position in an input string (\cref{prop:index-task}). 
\end{itemize}

As we will see below, the two examples above will provide lower bounds on the maximum possible VC dimension of an untrained transformer and split VC dimension of a trained transformer, respectively. We summarize our upper and lower bounds on these quantities in \cref{tab:upper-lower-bounds}.

\begin{table}[h]
    \caption{VC dimension bounds for transformers.}
    \label{tab:upper-lower-bounds}
    \centering
    \medskip
    \begin{tabular}{l|clcl}
        \toprule
        &   \multicolumn{2}{c}{VC dimension}   &  \multicolumn{2}{c}{split-VC dimension}  \\
        &   \multicolumn{2}{c}{(untrained transformers)}   &  \multicolumn{2}{c}{(trained transformers)}  \\        \midrule
        upper bound    &  $O(n^4)$ & (\cref{thm:transformer_vc})  &   $O(n^6)$ & (\cref{cor:transformer_split_vc})   \\
        lower bound     &   $\Omega(n)$ & (\cref{prop:symmetric-size-two})   &     $\Omega(n)$ & (\cref{prop:index-task})   \\
        \bottomrule
    \end{tabular}
\end{table}

\section{Background}

\subsection{Notation}

We write $[n]$ for the set $\{1, \ldots, n\}$ and $\log$ for the base-2 logarithm.

\begin{definition}[Bachmann--Landau notation]
    Given functions $f, g \colon \N \to \N$, we write:
\begin{itemize}
    \item $g \ino O(f)$ if there exist $c$ and $N$ such that, for all $n \geq N$, $g(n) \leq c f(n)$,
\item $g \ino \Omega(f)$ if there exist $c>0$ and $N$ such that, for all $n \geq N$, $c f(n) \leq g(n)$,
\item $g \ino \omega(f)$ if, for all $c>0$, there exists an $N$ such that, for all $n \geq N$, $c f(n) \leq g(n)$, and
    \item $g \ino \tilde{O}(f)$ if there exists a $k$ such that $g\ino O(f\log^kf)$.
\end{itemize}

When the functions $f$ and $g$ are related to
a transformer $T$, we write $g \ino O_T(f)$ to emphasize that the constant $c$ is allowed to depend on the transformer $T$.

\end{definition}

\subsection{Previous results on VC dimension}

\begin{definition}
\label{def:vc}
    Let $X$ be a set, $\mathcal{S}$ a family of subsets of $X$, and $A$ a finite subset of $X$. 
    We say that $\mathcal{S}$ \emph{shatters} $A$ if, for all $A' \subseteq A$, there is a set $S \in \mathcal{S}$ such that $S \cap A = A'$. Define the \emph{VC dimension of $\mathcal{S}$} (denoted $\VC(\mathcal{S})$) to be the greatest $n$ such that $\mathcal{S}$ shatters a set of size $n$ (or $\infty$ if none). %
\end{definition}

Throughout this paper, $X=\R^m$ for some $m$. 
We will be adapting the methods of \citet{KM1997} from FFNNs to transformers.  
We represent an untrained neural network as a finite Boolean combination $\phi({\bar x}, \paramvars)$ of expressions of the form $\tau({\bar x}, \paramvars) = 0$ or $\tau({\bar x}, \paramvars) > 0$, where 
$\tau({\bar x}, \paramvars)$ is a $C^\infty$ function from $\R^{m+k}$ to $\R$. Here, ${\bar x}$ is an $m$-tuple of \emph{input} variables, and $\paramvars$ a $k$-tuple of \emph{parameter} variables.
This $\phi$ gives rise to a family of subsets of $\R^{m}$, indexed by parameters in $\R^k$:
\[\mathcal{S}_\phi = \{ \{{\bar a}\in \R^{m} \mid \phi({\bar a}, \paramvals)\} \mid \paramvals\in \R^{k}\}.\]

In passing, when the $\tau$ are given by polynomials over $\R$, then such ${\mathcal{S}}_{\phi}$ are precisely the {\em semialgebraic families of semialgebraic subsets of $\R^m$}. It is well-known that such families have finite VC dimension \citep{GJ1995}.

Let us first describe Theorem 2 from \citet{KM1997}. For the necessary background regarding submanifolds of $\R^n$, see \cref{sec:background-smooth-manifolds}. Let $\phi({\bar x}, \paramvars)$ be as above, and let $\tau_{1}({\bar x}, \paramvars),\dots,\tau_{s}({\bar x}, \paramvars)$ be the functions appearing in $\phi$.
Define a \emph{p-function} of $\phi$ to be a function $\paramvars \mapsto \tau_i(\bar{a}, \paramvars)$, where $\bar{a} \in \R^m$ and $\tau_i$ is a function appearing in $\phi$.

\begin{theorem} \label{thm:km1997} Given $\phi({\bar x}, \paramvars)$ as above, suppose that $B$ is an integer such that 
for any choice of $r \le k$ many p-functions $F = (F_1, \ldots, F_r)$ of $\phi$, and values 
$(\epsilon_{1},\dots,\epsilon_{r})\in \R^{r}$, if $F^{-1}((\epsilon_{1},\dots,\epsilon_{r}))$ is a $(k-r)$-dimensional submanifold of $\R^{k}$, then it has at most $B$ connected components.  Then the VC dimension of ${\mathcal S}_{\phi}$ is at most $2\log B + (16 + 2\log s)k$. 
\end{theorem}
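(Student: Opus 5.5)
The plan is to follow the Karpinski--Macintyre strategy: a shattered set yields many sign patterns of the p-functions, and a transversality argument bounds the number of sign patterns by the component bound $B$.

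\textbf{Reduction to counting sign patterns.} Suppose $\mathcal{S}_\phi$ shatters $\{\bar a_1,\dots,\bar a_N\}$. Consider the $sN$ p-functions $F_{ij}(\paramvars)=\tau_i(\bar a_j,\paramvars)$. The truth value of $\phi(\bar a_j,\paramvals)$ depends only on the signs of $F_{1j}(\paramvals),\dots,F_{sj}(\paramvals)$ in $\{<0,=0,>0\}$. Shattering therefore forces at least $2^N$ distinct sign vectors of $(F_{ij})$ as $\paramvals$ ranges over $\R^k$. It thus suffices to show that a family of $M=sN$ p-functions realizes at most roughly $B\cdot(cM)^{k}$ sign vectors. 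Setting $2^N\le B(cM)^k$ and solving gives $N\le 2\log B+(16+2\log s)k$, after the standard manipulation $N\le \log B+k\log(csN)$, which implies the stated linear bound.

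\textbf{Counting via perturbation and Sard.} First I replace sign conditions by strict ones. For each realized sign vector, pick a witness $\paramvals$. Choose a small $\epsilon>0$ such that the sign vectors are distinguished by which of the sets $\{F_{ij}>\epsilon\}$, $\{F_{ij}<-\epsilon\}$, $\{|F_{ij}|<\epsilon\}$ contain the witnesses. Each realized sign vector then gives a distinct connected component of the complement of the $2M$ level sets $F_{ij}=\pm\epsilon$. By Sard's theorem applied to the smooth maps $(F_{i_1},\dots,F_{i_r})$, I can choose $\epsilon$ (or the $\epsilon_\ell$ independently) so that every joint level set $F^{-1}(\epsilon_1,\dots,\epsilon_r)$ with $r\le k$ is either empty or a $(k-r)$-dimensional submanifold. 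The hypothesis then bounds each such level set by $B$ components. For $r>k$, generic intersections are empty.

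\textbf{Counting complement components (the main obstacle).} Next I bound the number of connected components of $\R^k\setminus\bigcup_\ell Z_\ell$, where the $Z_\ell$ are $2M$ hypersurfaces in general position and every intersection of $r\le k$ of them has at most $B$ components. I would follow Warren/Milnor-type reasoning adapted by Karpinski--Macintyre. Restrict to a large ball, or compactify. Each bounded component of the complement contains a point where a generic linear or distance function attains a minimum. Morse/critical-point reasoning assigns that point to a stratum, namely an intersection $Z_{\ell_1}\cap\dots\cap Z_{\ell_r}$, in such a way that distinct components map to distinct components of strata (up to bounded multiplicity). Summing over $r\le k$ gives at most $\sum_{r\le k}\binom{2M}{r}B\le B(2eM/k)^k$ regions, plus a similar count for unbounded components.

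The delicate part is making the assignment injective and ensuring that noncompact components are handled. I would try first to intersect with a large generic sphere, treated as one more hypersurface, which raises $k$ bounds by one and explains the slack in the constant $16$. Finally, I would plug into the inequality $2^N\le B(cMs)^k$ and simplify with $\log N\le N/(2k)+\log k$-type estimates to reach $2\log B+(16+2\log s)k$.
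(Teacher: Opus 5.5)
The paper does not prove this theorem; it quotes it from Karpinski and Macintyre. Your outer structure matches the standard argument: $2^N$ realized sign vectors of the $M=sN$ p-functions, slabs between two nearby levels for the ``$=0$'' conditions, Sard for general position, a region count of $B\sum_{r\le k}\binom{M}{r}2^r\le B(2eM/k)^k$, and then arithmetic. That arithmetic already gives a constant well below $16$, so no extra slack needs explaining. The genuine gap is the region count itself, and the route you propose for it fails under this hypothesis. Sending a bounded region to the stratum component containing the minimum of a generic linear function $\lambda$ on its closure does not have bounded multiplicity. In $\R^2$, take $Z_1=\{y=0\}$, $Z_2$ the graph of a function with $m$ separate negative dips, and $\lambda=y+\delta x$ with $\delta$ small. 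Each of the $m$ bounded regions below the axis attains $\min\lambda$ at the bottom of its dip, which is an interior point of the single connected stratum $Z_2$. Minimum-point arguments of Milnor type work only when you can bound the number of critical points of $\lambda$ on each stratum (via B\'ezout for polynomials). Your hypothesis only bounds $b_0$ of level sets of p-functions. The generic sphere fails for the same reason: $\{\|\bar y\|=R\}\cap Z_S$ is not a level set of p-functions, so nothing bounds its number of components.

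What is missing is a Warren-type lemma, which is purely topological. If $U$ is open in a manifold and $V$ is a closed two-sided hypersurface (such as a regular level set $\{F=\epsilon\}$), then $b_0(U\setminus V)\le b_0(U)+b_0(U\cap V)$. The reason is that each component of $U\cap V$ borders at most two components of $U\setminus V$. Inside each component of $U$, the bipartite adjacency graph between regions and hypersurface pieces is connected, which forces (number of regions) $\le$ (number of hypersurface pieces) $+1$. Add the $Z_\ell$ one at a time and apply the same bound recursively inside each $Z_m$; by general position, the $Z_\ell\cap Z_m$ are again regular level sets there. This gives $b_0(\R^k\setminus\bigcup_\ell Z_\ell)\le\sum_{|S|\le k}b_0\bigl(\bigcap_{\ell\in S}Z_\ell\bigr)$. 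Unbounded regions are handled automatically and no auxiliary hypersurface is introduced. A direct alternative is to send a region $C$ to a component of $Z_S$ with $S$ maximal such that $\bar C$ meets $Z_S$. Maximality makes $\bar C\cap Z_S$ a union of components not met by any other $Z_\ell$, and each such component borders at most $2^{|S|}$ regions. Finally, in the Sard step use your parenthetical option. One common $\epsilon$ does not suffice, since $(\pm\epsilon,\dots,\pm\epsilon)$ moves along a measure-zero line in $\R^r$, and two p-functions may coincide. The levels must be chosen independently for each function and sign, which the hypothesis allows.
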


The application by \citet{KM1997} of \cref{thm:km1997} to FFNNs and our application to transformers (which are both nontrivial) will depend on a choice of $B$ for suitable $\phi$ and $\tau_{i}$ and this will be given by a theorem of \citet{Khovanskii1991}. 

\begin{definition} A \emph{family of $r$ exponential--linear polynomials of degree at most $D$ in $k$ variables and $q$ exponentials} consists of $r$ functions of the form $P_i(\bar{x}, e^{\Lambda_{1}({\bar x})},\dots,e^{\Lambda_{q}({\bar x})}$) where $\bar{x} = (x_1, \ldots, x_k)$, $P_i$ is a polynomial of degree at most $D$,
and the $\Lambda_{j}({\bar x})$ are linear polynomials.
\end{definition}

\begin{theorem}[\citealp{Khovanskii1991}]
    \label{Khovanskii_theorem}
    Let $P_1, \dots, P_r$ and $\Lambda_1, \dots, \Lambda_q$ be a family of exponential--linear polynomials of degree at most $D$ in $k$ variables and $q$ exponentials, where $k \geq r$. Assume that $$\{ \bar{a} \in \mathbb{R}^k \mid P_1(\bar{a}) = 0 \wedge \dots \wedge P_r(\bar{a}) = 0 \}$$ is a submanifold of $\R^k$ of dimension $k - r$. Then the number of connected components of this set is at most 
    $B = 2^{\frac{q(q-1)}{2}} \cdot D^r (rD + k - r + 1)^{k - r} ( ( k - r + 1) (rD + k - r + 1) + r - k )^q$.
\end{theorem}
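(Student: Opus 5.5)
This is Khovanskii's fewnomial-type bound, and the plan is to follow his Rolle--Khovanskii approach. There are two stages. First, reduce counting connected components of the $(k-r)$-dimensional manifold $V=\{P_1=\dots=P_r=0\}$ to counting isolated nondegenerate solutions of a square system of $k$ exponential--linear equations in $k$ unknowns. Second, bound the number of such solutions by induction on the number $q$ of exponentials. That induction removes one exponential at a time, at the cost of a factor $2^{j}$ at step $j$, which will account for the factor $2^{q(q-1)/2}$.

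For the reduction, I will pick a generic point $\bar c$ and consider the function $g(\bar x)=\lVert \bar x-\bar c\rVert^2$ restricted to $V$. Since $g$ is proper, every connected component of $V$ contains a minimum of $g$, so it contains a critical point. By a Sard-type genericity argument in $\bar c$, these critical points are nondegenerate. The critical points are characterized by $P_1=\dots=P_r=0$ together with the condition that the $k\times(r+1)$ matrix $(\nabla P_1,\dots,\nabla P_r,\nabla g)$ has rank at most $r$. Because $\partial_{x_l} e^{\Lambda_j}=\lambda_{jl}e^{\Lambda_j}$, each $\partial P_i/\partial x_l$ is again a polynomial of degree at most $D$ in $\bar x$ and the $q$ exponentials. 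To obtain exactly $k-r$ further equations, I will use generic Lagrange-type combinations instead of all the minors: for instance, require $\nabla g$ to lie in the span of the $\nabla P_i$ after projecting onto a generic $(k-r)$-dimensional complement, with the multipliers eliminated via Cramer's rule. Each of the resulting equations has degree at most about $rD+1$. I then need to enlarge the bound so that the degree count matches the stated $(rD+k-r+1)$, which is presumably where the extra $k-r$ comes from.

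The counting step is the key lemma: a nondegenerate system of $k$ equations $Q_i(\bar x,e^{\Lambda_1},\dots,e^{\Lambda_q})=0$ with degrees $d_i$ has at most
\[
2^{q(q-1)/2}\Bigl(\textstyle\prod_i d_i\Bigr)\Bigl(\textstyle\sum_i d_i+1\Bigr)^q
\]
isolated solutions. I will prove it by induction on $q$. For $q=0$ it is Bézout's theorem. For the inductive step:
\begin{itemize}
\item Introduce $y=e^{\Lambda_q(\bar x)}$ as a new variable, so that $e^{\Lambda_q}$ is replaced by an algebraic variable.
\item Drop one equation, leaving a curve $\Gamma$ in $(\bar x,y)$-space defined by the remaining equations together with the Pfaffian relation.
\item Apply the Khovanskii--Rolle theorem along the one-form $dy-y\,d\Lambda_q$. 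It bounds the number of points where $\Gamma$ meets the hypersurface $\{y=e^{\Lambda_q}\}$ by the number of contact points, plus the number of noncompact components of $\Gamma$.
\item Express the contact points by a Jacobian determinant of degree at most $\sum_i d_i$, which gives a system with one fewer exponential to which the induction applies.
\item Bound the noncompact components by intersecting $\Gamma$ with a large sphere, or with generic hyperplanes, and apply induction again.
\end{itemize}
Summing these contributions should yield the stated bound with the $2^{q-1}$ growth at each step.

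Finally, I substitute the degrees $D$ ($r$ times) and roughly $rD+k-r+1$ ($k-r$ times) into the lemma. This gives $D^r(rD+k-r+1)^{k-r}$ for the product of the degrees, and a sum-of-degrees term bounded by $(k-r+1)(rD+k-r+1)+r-k$, matching the stated $B$. The main obstacle will be the inductive Rolle--Khovanskii step. It requires controlling the noncompact branches of $\Gamma$ and ensuring the genericity and nondegeneracy hypotheses at every stage, via small perturbations of the coefficients and of $\bar c$ that do not decrease the number of components, so that isolated solutions are counted correctly. The bookkeeping of degrees to land exactly on the stated formula is routine but delicate.
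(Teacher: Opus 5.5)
The paper gives no proof of this statement. It imports the result from Khovanskii and uses it as a black box, and your outline is essentially Khovanskii's own argument.

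Several things you left unchecked do work:
\begin{itemize}
\item The contact determinant $\det(\omega,dQ_1,\dots,dQ_k)$ has degree at most $\Sigma=\sum_i d_i$. The $dy$-coefficient of $\omega$ is $1$, and any term using a $dx_l$-coefficient $-\lambda_{ql}y$ must also use some $\partial_y Q_i$, which has degree at most $d_i-1$.
\item The large-sphere count of noncompact branches fits the recursion. After dividing by $2^{(q-1)(q-2)/2}\prod_i d_i$ you need $\Sigma(2\Sigma+1)^{q-1}+(\Sigma+3)^{q-1}\le(\Sigma+1)(2\Sigma+2)^{q-1}$, which holds for $\Sigma\ge 1$. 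Work in all of $\R^{k+1}$, where $\{y=e^{\Lambda_q}\}$ is still a separating solution of the nowhere-vanishing form $\omega$; otherwise branches escaping to $y=0$ would also count as noncompact.
\item You need not explain the extra $k-r$. Your minors have degree $rD+1$, and the bound is monotone in the degrees.
\end{itemize}

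\textbf{The reduction fails as written.} To feed the minimum $p$ of $g$ on a component into your lemma, $p$ must be a \emph{nondegenerate} root of $P_1=\dots=P_r=0$, $M_{J\cup\{l\}}=0$ ($l\notin J$). This requires $\nabla P_1(p),\dots,\nabla P_r(p)$ to be linearly independent. Given that, a generic linear change of coordinates makes the $r\times r$ minor $\Delta_J(p)\neq 0$, and then nondegeneracy of the root is equivalent to $p$ being a Morse critical point.

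The hypothesis only says that $V$ is a $(k-r)$-dimensional submanifold, which does not give independence. For example, $P_1=(x_1^2+x_2^2-1)^2$ cuts out the unit circle in $\R^2$, but $\nabla P_1\equiv 0$ there. So every point of $V$ satisfies your critical-point equations, none is a nondegenerate root, and the lemma says nothing about $V$. You have two options:
\begin{itemize}
\item assume $V$ is a regular level set of $(P_1,\dots,P_r)$, which is the nondegeneracy hypothesis in Khovanskii's formulation and what holds at regular values; or
\item give a separate argument for nonregular defining systems, which your method does not supply.
\end{itemize}

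\textbf{The second gap is the one you flag, and it is more than bookkeeping.} Read $\Gamma$ as $\{Q_1=\dots=Q_k=0\}\subseteq\R^{k+1}$, so the dropped equation is $y=e^{\Lambda_q}$ itself. Your inductive hypothesis bounds only nondegenerate roots, while Khovanskii--Rolle needs a bound on \emph{all} contact points. You must therefore arrange three things while keeping the original nondegenerate roots:
\begin{itemize}
\item $\Gamma$ is smooth;
\item $\Gamma$ meets the sphere transversally;
\item every contact point is a nondegenerate root of $Q=0$, $\det(\omega,dQ)=0$.
\end{itemize}

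Perturbing only the free terms (the obvious Sard argument) does not achieve this. Take $k=q=1$, $\Lambda_1=x$, $\omega=dy-y\,dx$ and $Q=2y-y^2/2-x$. Then $\det(\omega,dQ)=(y-1)^2$, so every curve $Q=\epsilon$ has a degenerate contact point at $y=1$.

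You need a family of perturbations rich enough for a parametric-transversality (Sard) argument. Adding small generic affine terms in $(x,y)$ works, and it keeps the degrees and the exponential--polynomial form. In the example, $Q+ax$ gives $\det=(y-1)^2-a$, which has two simple zeros for small $a>0$. Any finite set of nondegenerate roots survives small perturbations, so nothing is lost. This general-position lemma is the substantive part of Khovanskii's inductive step, and without it your induction does not close.
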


If $M$ is the largest of $r$, $k$, and $D$ in the theorem statement above, then the bound can be written as $\log B \ino O(q^2 + M\log{M})$.

\subsection{Transformers}

We follow the standard definition of transformer (\cref{sec:transformers}; \citealp{Vaswani2017}).
Our main results (\cref{thm:transformer_vc,cor:transformer_split_vc}) are independent of many details, like attention masking, layer normalization, residual connections, position encodings, and so on. 

An \emph{untrained} transformer $T$ has a fixed number of layers and heads and fixed input/hidden dimension $d$, but depends on unspecified learnable parameters $\paramvars$. %
Let $k$ be the number of such parameters; we may think of $\paramvars$ as belonging to $\R^k$.
A \emph{trained} transformer $T_\paramvals$ is an untrained transformer $T$ together with some fixed parameter values $\paramvals$.

A transformer operates on sequences. We write $T_\paramvals^{(n)}$ for the restriction of $T_\paramvals$ to sequences of length~$n$.
In \cref{def:transformer}, the input is a sequence of real vectors. But in many applications, transformers take discrete inputs and map them to real vectors via word and/or position embeddings (\cref{def:embeddings}). 
Our main results  (\cref{thm:transformer_vc,cor:transformer_split_vc}) hold with or without word/position embeddings, and whether they are considered to be fixed or to vary with the learned parameters. %

\Cref{def:transformer} defines a final output layer, which computes a single real value from the last position.
We say that $T_\paramvals$ accepts a sequence $X$ of length $n$ iff $T_\paramvals^{(n)}(X) > 0$.
Then for any input length $n$, an untrained transformer $T$ gives rise to a family of sets, indexed by parameters $\paramvals$:
\[\mathcal{S}_{T^{(n)}} := \{ \{X \in \R^{n \times d} \mid T_\paramvals^{(n)}(X) > 0\} \mid \paramvals \in \R^k\}. \]
We are interested in the $\VC$ dimension of this family, which we call $\VC(T^{(n)})$ for short.  On general grounds (namely, the o-minimality of the field of real numbers equipped with the exponential function), it is finite.  We will use the methods of \citet{KM1997} to give the bound~$O(n^{4})$.

\section{Main results}

\subsection{VC dimension of transformers}
\label{sec:VC_and_transformers}

\begin{theorem} \label{thm:transformer_vc} Let $T$ be an untrained transformer. Then $\textnormal{VC}(T^{(n)}) \ino O(n^{4})$. %
\end{theorem}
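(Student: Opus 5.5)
We plan to apply \cref{thm:km1997} to a Boolean formula $\phi(\bar x,\paramvars)$ that encodes the computation of $T^{(n)}$, and to obtain the bound $B$ from \cref{Khovanskii_theorem}. The parameter count $k$ of $T$ does not depend on $n$. So the bound $2\log B+(16+2\log s)k$ gives $O(n^4)$ as long as $\log B\in O(n^4)$ and $\log s\in O(n^4)$.

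\textbf{Flattening the transformer.} Softmax attention is not an exponential--linear polynomial in the parameters. The scores $q_i\cdot k_j$ are themselves polynomial in the parameters and in earlier activations, and normalisation introduces division. We therefore flatten $T^{(n)}$ into a formula with auxiliary existentially defined quantities, handled the way \citet{KM1997} handle hidden units. Concretely, $\phi$ will be a Boolean combination of sign conditions on $C^\infty$ functions $\tau_i(\bar x,\paramvars)$. These $\tau_i$ are the actual layer-by-layer values, namely the score $s^{(\ell)}_{ij}$, the exponential $e^{s^{(\ell)}_{ij}}$, the attention output, and the FFNN/ReLU branches. ReLU and other piecewise operations contribute sign conditions to $\phi$, in the usual way.

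For each layer $\ell$ we have $n^2$ scores, each producing an exponential. Composition through $L$ layers yields $\tau$'s built from nested exponentials. To reach Khovanskii's setting we introduce new variables for each intermediate quantity. We treat each $p$-function level set $F^{-1}(\bar\epsilon)$ as the projection of a manifold in an extended space, with auxiliary variables $z$ for every activation and $u_{ij}^{(\ell)}$ for each exponential. The defining equations are then polynomial except for constraints of the form $u=e^{w}$ with $w$ a variable. Rewriting these as $u=e^{w}$ with $w$ linear in the new variables puts us in the exponential--linear setting. Division by softmax normalisers is cleared by multiplying through, since the normaliser is positive.

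\textbf{Counting.} One $p$-function requires $O(n^2)$ exponentials per layer and $O(n)$ activation variables of dimension $d$, so $O(n^2)$ new variables and exponentials overall, for fixed $L$ and $d$. A tuple of $r\le k$ $p$-functions multiplies this by $r\le k$, still $O(n^2)$. Hence $q\in O(n^2)$, and the number of variables $k'$, the number of equations $r'$, and the degree $D$ (bounded by a constant depending on $T$) are all $O(n^2)$. \cref{Khovanskii_theorem} then gives $\log B\in O(q^2+M\log M)=O(n^4)$. Since $s$ is polynomial in $n$, the $(2\log s)k$ term is only $O(\log n)$, and the theorem follows.

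\textbf{Main obstacle.} The delicate step is transferring connected-component bounds between the level set $F^{-1}(\bar\epsilon)\subseteq\R^k$ and the lifted set. The lifted set is the graph of a smooth map over $F^{-1}(\bar\epsilon)$, since the auxiliary variables are smooth functions of $\paramvars$ once $\bar a$ is fixed. It is therefore diffeomorphic to $F^{-1}(\bar\epsilon)$, which gives both the same number of components and the submanifold dimension condition (codimension $r$ plus the number of auxiliaries). This requires the $\tau_i$ to be genuinely smooth. For that we will split nonsmooth pieces (ReLU, layer normalisation's square root) into sign-condition cases inside $\phi$, so that each $\tau_i$ is an analytic expression. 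We will handle the square root by another auxiliary variable $v$ with $v^2=\cdot$ and $v>0$.
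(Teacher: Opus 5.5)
Your proposal follows essentially the paper's proof. You split each ReLU into sign-condition cases so that $\phi$ is a Boolean combination of $C^\infty$ functions, and apply \cref{thm:km1997}. You then lift each level set $F^{-1}(\bar\epsilon)$ with auxiliary variables for scores, exponentials and normalisers, making it homeomorphic to a $(k-r)$-dimensional zero set of exponential--linear equations. Finally, \cref{Khovanskii_theorem} with $q,\kappa\in O(n^2)$ gives $\log B\in O(n^4)$. Two details need correcting.

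First, $s$ is not polynomial in $n$. Because attention mixes positions, later pre-activations and the final output depend on the activation pattern of all $nd$ ReLUs in every earlier layer. Your case split therefore yields $(2^{nd})^L(Lnd+1)$ functions, so $\log s\in O(n)$ rather than $O(\log n)$. This is harmless, since you only need $\log s\in O(n^4)$.

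Second, the layer-norm square root needs no case split, because its argument is at least $\varepsilon>0$. More importantly, your encoding $v^2=\cdot$, $v>0$ does not fit \cref{Khovanskii_theorem}, which concerns zero sets of equations only. If you drop $v>0$, you admit the branch $v=-\sqrt{\cdot}$. On that branch the downstream equations compute different values, and the zero set need not be a $(k-r)$-dimensional submanifold. The paper instead imposes $z_i^4=v_i$, substitutes $z_i^2$ for $\sqrt{v_i}$, and clears the division by multiplying through by $z_i^2$. Then both real roots $z_i=\pm v_i^{1/4}$ give the correct value, and the lifted set is a disjoint union of copies of the level set. Hence the number of components of $F^{-1}(\bar\epsilon)$ is at most that of the lifted set, which Khovanskii's bound controls.
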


\begin{proof}[Proof sketch]
See \cref{sec:transformer_vc_proof} for the full proof.

When \citeauthor{KM1997} give their VC dimension bound for sigmoid FFNNs, they apply \cref{thm:km1997} to a formula $\tau(\bar{x},\paramvars)>0$ where $\tau$ is the whole network, which is a $C^\infty$ function. 
However, to obtain the bound $B$ by appeal to \cref{Khovanskii_theorem}, a rather subtle argument is used, adding new variables to obtain a system of exponential--linear polynomial equations. 

Unlike an FFNN, a transformer does not compute a $C^{\infty}$-function, due to the appearance of $\max(-, -)$ in the position-wise feed-forward network ${\mathcal F}$ (in each layer). This is actually the only obstruction, as the various rational expressions always have nonzero denominator and the square root in layer normalization is applied to a positive real number bounded away from $0$. So, in the first step, we write a formula $\phi({\bar x}, \paramvars)$, which is equivalent to a transformer's acceptance condition and therefore does not change its VC dimension. It is a Boolean combination of inequalities involving only $C^{\infty}$ functions $\tau_1(\bar{x}, \paramvars), \ldots, \tau_s(\bar{x}, \paramvars)$, where $s \in 2^{O(n)}$, so we can apply \cref{thm:km1997} to it.

It remains to find the bound $B$ on the number of connected components of the submanifold defined by the equations $F_i(\paramvars) = \epsilon_i$ for $i \in [r]$, where the $F_i$ are p-functions of $\phi$. Now \cref{Khovanskii_theorem} requires a system of exponential--linear polynomials, but the $F_i$ are not. 
So, we add some new variables and equations to eliminate square roots (in layer normalization) and division (in attention and layer normalization) and to ensure that the argument of each $\exp$ (in attention) is linear. 
The number of new variables and equations is $\kappa \ino O(n^2)$.
Then \cref{Khovanskii_theorem} gives that the number of connected components of the set defined by the new equations (which is homeomorphic to the set defined by the old equations) is at most $B$,
where $\log B \in O(q^2 + M \log M)$.

Here, $q$ is the total number of exponentials appearing in the system of equations, which is in $O(n^2)$ (one for each pair of positions, for each layer, head, and for each equation).
And $M$ is the greatest among of the number of equations in the system ($r+\kappa$), the number of variables appearing in the system ($k+\kappa$), and the maximum degree of the exponential--linear polynomials used ($4$). Therefore, $\log B\ino O(n^4)$.
By \cref{thm:km1997}, the VC dimension of $T^{n}$ is at most $2 \log B + (16+2\log s)k$ where $s$ is the number of $C^{\infty}$ functions appearing in $\phi$ above. As $\log s \in O(n)$, the VC dimension is in $O(n^4)$, concluding the proof of \cref{thm:transformer_vc}. 
\end{proof}

\subsection{Inputs as parameters}

\Citet{Kozachinskiy2025} devised a way of using VC dimension to study the complexity of a single function defined on sequences (as opposed to a family of such functions). Given a set $S$, let $\Sigma^S$ denote the set of functions from $S$ to $\Sigma$. Identify the set of strings $\Sigma^n$ and the set of functions $\Sigma^{[n]}$ in the natural way. Given $\bar{x} \in \Sigma^{[n] \setminus \paramindices}$ and $\paramvars \in \Sigma^\paramindices$, let $\bar{x} \oplus \paramvars$ denote the unique function (i.e., string) $\bar{z} \in \Sigma^n$ such that $\restrict{\bar{z}}{[n] \setminus \paramindices} = \bar{x}$ and $\restrict{\bar{z}}{\paramindices} = \paramvars$.

\begin{definition}
Given a subset $L^{(n)} \subseteq \Sigma^n$ (thought of as a language in most applications) and a subset $\paramindices^{(n)} \subseteq [n]$, define
\begin{equation*}
\mathcal{S}_{L^{(n)}, \paramindices^{(n)}} := \{ \{\bar{x} \in \Sigma^{[n] \setminus \paramindices^{(n)}} \mid \bar{x} \oplus \paramvars \in L^{(n)} \} \mid  \paramvars \in \Sigma^{\paramindices^{(n)}}\}.
\end{equation*}

Given a trained transformer $T_\paramvals$ and family of subsets of positions $K_n \subseteq [n]$, define
$$\mathcal{S}_{T_\paramvals^{(n)}, \paramindices^{(n)}} := \{ \{ X \in (\R^d)^{[n] \setminus \paramindices^{(n)}} \mid T_\paramvals^{(n)}(X \oplus Y) > 0 \} \mid Y \in (\R^d)^{\paramindices^{(n)}} \}.$$

By considering the quantity $\VC( \mathcal{S}_{L^{(n)},\paramindices^{(n)}} )$, we can measure the complexity of a single function (as opposed to a family of functions). While this quantity depends on a specified partition of the input positions, the following does not.

\end{definition}

\begin{definition}
    Given a subset $L^{(n)} \subseteq \Sigma^n$, define the \emph{split VC dimension of $L$} to be 
    \[\textnormal{Split-VC}(L^{(n)}) := \max_{\paramindices^{(n)} \subseteq [n]} ( \VC ( \mathcal{S}_{L^{(n)}, \paramindices^{(n)}} ) ).\]
\end{definition}

\begin{theorem}
    \label{thm:transformer_position_vc}
    Let $T_\paramvals$ be a trained transformer and let $\paramindices^{(n)} \subseteq [n]$. Then \[\VC\bigl(\mathcal{S}_{T_\paramvals^{(n)}, \paramindices^{(n)}}\bigr)\ino O(n^4 \cdot |\paramindices^{(n)}|^2).\]
\end{theorem}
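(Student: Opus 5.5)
The plan is to rerun the argument for \cref{thm:transformer_vc}, with the roles of inputs and parameters swapped. Fix the trained transformer $T_\paramvals$ and the set $\paramindices^{(n)}$, and write $p = |\paramindices^{(n)}|$. The parameter variables are now the $k' = pd$ real coordinates of $Y \in (\R^d)^{\paramindices^{(n)}}$. The input variables are the coordinates of $X \in (\R^d)^{[n]\setminus\paramindices^{(n)}}$. The weights $\paramvals$ are fixed constants.

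\textbf{Step 1: the formula.} As in the proof of \cref{thm:transformer_vc}, I eliminate the $\max(-,-)$ in each position-wise FFN by case splitting over sign conditions. This gives a Boolean combination $\phi(X, Y)$ of sign conditions on $C^\infty$ functions $\tau_1,\dots,\tau_s$, with $\log s \in O(n)$. Its realizations over $Y$ are exactly the sets in $\mathcal{S}_{T_\paramvals^{(n)}, \paramindices^{(n)}}$. By \cref{thm:km1997}, the VC dimension is then at most $2\log B + (16 + 2\log s)k'$. The second term is $O(n \cdot p)$, which is well within the claimed bound.

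\textbf{Step 2: bounding $B$.} I take $r \le k'$ p-functions $F_i(Y) = \tau_i(\bar a, Y)$ and level values $\epsilon_i$. As in the earlier proof, I introduce $\kappa$ auxiliary variables and equations to remove square roots and divisions and to make every exponent linear. Now each equation of the system depends on a different fixed input $\bar a$ (the $\bar a$ may differ across the $r$ p-functions). So each equation brings its own $O(n^2)$ attention scores, softmax denominators, layer-norm variances, and so on. Hence $\kappa \in O(r n^2) \subseteq O(pn^2)$, and the number of exponentials is $q \in O(r n^2) = O(p n^2)$. Every $e^{\Lambda}$ comes from an attention score, which I treat as a new variable so that its exponent is linear. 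The total numbers of variables and equations are then $O(pn^2)$, and the degree remains bounded by a constant $D$ depending only on $T$. \cref{Khovanskii_theorem} then gives
\[
\log B \in O(q^2 + M\log M) = O(p^2 n^4 + pn^2 \log(pn)) = O(n^4 p^2).
\]
For this I also need the homeomorphism between the lifted solution set and the original level set, and the check that the lifted set is a submanifold of the right codimension. Both go through as before, since each auxiliary variable is a smooth function of the others and is determined by its defining equation through a nondegenerate Jacobian.

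\textbf{Main obstacle.} The delicate step is the counting in Step 2. In \cref{thm:transformer_vc} the parameter count $k$ was a constant and the $n$-dependence came only through the inputs. Here both $r$ and $k'$ grow with $p$, and each of the $r$ equations comes from a different $\bar a$. So the exponentials cannot be shared across equations, and $q$ scales like $r\cdot n^2$. Squaring this is exactly what produces the factor $p^2$.

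The one real modeling choice concerns the weights. Since $\paramvals$ is fixed, terms such as $W Y_j$ are linear in the new parameters, so attention logits are \emph{quadratic}, not linear, in $Y$. Exponentials of these cannot be used directly. I would handle this as planned above: introduce one variable $z$ per logit with a degree-2 polynomial equation defining it, and use $e^{z}$, whose exponent is linear. Deeper layers' hidden states are likewise introduced as variables, one per position and coordinate per layer and per equation. This keeps the degree constant while the counts remain $O(pn^2)$. With that in place, the bound follows by plugging into \cref{thm:km1997}.
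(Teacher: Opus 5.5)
Your proposal is correct and follows the paper's proof. You rerun the argument of \cref{thm:transformer_vc} with the $pd$ coordinates of the positions in $\paramindices^{(n)}$ as the parameters, note that each of the $r \le pd$ p-functions brings its own $O(n^2)$ exponentials so that $q \in O(n^2\,|\paramindices^{(n)}|)$, and square this to get the bound. You also check explicitly the $M\log M$ and $(16+2\log s)k'$ terms and the quadratic attention logits, which the paper leaves implicit.
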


\begin{proof}
    The proof proceeds exactly as in \cref{thm:transformer_vc}, but now, $|\paramindices^{(n)}|$ of the input vectors become parameters. Each vector has $d$ components, so there are now $k \in O(|\paramindices^{(n)}|)$ parameters. In the application of \cref{thm:km1997}, we must consider up to $r \le k$ many p-functions, so in the application of \cref{Khovanskii_theorem}, the number of exponentials must be multiplied by $k$: Previously we had $q \in O(n^2)$, but now we have $q \in O(n^2 \cdot |\paramindices^{(n)}|)$. Therefore, the VC dimension is in $O(n^4 \cdot |\paramindices^{(n)}|^2)$.
\end{proof}

\begin{corollary} \label{cor:transformer_split_vc}
   Let $T_\paramvals$ be a trained transformer. Then $\textnormal{Split-VC}(T^{(n)}_\paramvals) \ino O(n^{6})$.
\end{corollary}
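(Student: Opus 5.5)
The corollary follows almost directly from \cref{thm:transformer_position_vc}. The one subtlety is that split VC dimension is defined for a language $L^{(n)} \subseteq \Sigma^n$ over discrete positions, whereas \cref{thm:transformer_position_vc} concerns real-valued input vectors. So the first step is to fix how $T_\paramvals$ defines a language. We take $L^{(n)} = \{ w \in \Sigma^n \mid T_\paramvals^{(n)}(E(w)) > 0\}$, where $E$ is the (fixed, trained) word and position embedding of \cref{def:embeddings}. If the transformer instead takes raw real vectors, we read $\Sigma^n$ as $(\R^d)^n$ and no translation is needed.

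Next, fix any $\paramindices^{(n)} \subseteq [n]$. We show that $\VC(\mathcal{S}_{L^{(n)},\paramindices^{(n)}}) \le \VC(\mathcal{S}_{T_\paramvals^{(n)},\paramindices^{(n)}})$. Suppose a finite set $A \subseteq \Sigma^{[n]\setminus \paramindices^{(n)}}$ is shattered by the discrete family, with witnesses $\paramvars_{A'} \in \Sigma^{\paramindices^{(n)}}$ for each $A' \subseteq A$. Map each $\bar{x} \in A$ to its embedded vectors at the positions outside $\paramindices^{(n)}$. Distinct strings must go to distinct tuples here, since otherwise they could not be separated by any witness. Take $Y_{A'}$ to be the embedded vectors of $\paramvars_{A'}$. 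The embedding is applied position-wise, with the position embedding depending only on the position, so $E(\bar{x} \oplus \paramvars) = E(\bar{x}) \oplus E(\paramvars)$. Hence $T_\paramvals^{(n)}(E(\bar{x}) \oplus Y_{A'}) > 0$ iff $\bar{x} \oplus \paramvars_{A'} \in L^{(n)}$, and the image of $A$ is shattered by $\mathcal{S}_{T_\paramvals^{(n)},\paramindices^{(n)}}$.

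By \cref{thm:transformer_position_vc}, the right-hand side is in $O(n^4 |\paramindices^{(n)}|^2)$, and $|\paramindices^{(n)}| \le n$, so it is in $O(n^6)$. The key point is that the hidden constant depends only on $T$ (through $d$, the number of layers and heads, and so on) and not on the choice of $\paramindices^{(n)}$. Because of that uniformity, we may take the maximum over all $2^n$ subsets $\paramindices^{(n)}$ and obtain $\textnormal{Split-VC}(T_\paramvals^{(n)}) \in O(n^6)$.

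The main obstacle is checking this uniformity of the constant in \cref{thm:transformer_position_vc}. I would revisit its proof and confirm that the quantities $q$, $M$, $s$ and $\kappa$ are bounded by expressions in $n$ and $|\paramindices^{(n)}|$ whose coefficients depend only on the architecture, not on which positions are parameters. This should hold, since the number of exponentials per p-function and the number of auxiliary variables are counted per pair of positions, independently of $\paramindices^{(n)}$.
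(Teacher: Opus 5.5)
Your proposal is correct and takes the same route as the paper, which simply substitutes $|\paramindices^{(n)}| \le n$ into the $O(n^4 \cdot |\paramindices^{(n)}|^2)$ bound of \cref{thm:transformer_position_vc}. Your two additional steps are details the paper leaves implicit, and both are handled soundly: the reduction from the discrete family $\mathcal{S}_{L^{(n)},\paramindices^{(n)}}$ to the real-vector family via the position-wise embedding, and the check that the hidden constant does not depend on which $\paramindices^{(n)}$ is chosen.
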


\begin{proof}
    For every $\paramindices^{(n)} \subseteq [n]$, we have $|\paramindices^{(n)}| \in O(n)$.
\end{proof}

If we try to apply this split VC dimension bound (\cref{cor:transformer_split_vc}), we at once encounter a difficulty.

\begin{proposition}
Let $\Sigma$ be a finite alphabet, and let $L \subseteq \Sigma^*$. For all $n$, $\textnormal{Split-VC}(L^{(n)}) \leq n \log |\Sigma|$.
\end{proposition}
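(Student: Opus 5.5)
The plan is a counting argument that bounds the size of any shattered set by the number of available parameter assignments. Fix $n$ and an arbitrary $\paramindices^{(n)} \subseteq [n]$, and write $K = \paramindices^{(n)}$. The family $\mathcal{S}_{L^{(n)},K}$ is indexed by $\paramvars \in \Sigma^{K}$. Each member is the set of $\bar{x} \in \Sigma^{[n]\setminus K}$ with $\bar{x} \oplus \paramvars \in L^{(n)}$. Hence the family has at most $|\Sigma^{K}| = |\Sigma|^{|K|}$ distinct members.

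Now suppose $A \subseteq \Sigma^{[n]\setminus K}$ is shattered by $\mathcal{S}_{L^{(n)},K}$.
\begin{itemize}
\item By \cref{def:vc}, each of the $2^{|A|}$ subsets $A' \subseteq A$ equals $S \cap A$ for some $S$ in the family.
\item Distinct subsets $A'$ need distinct sets $S$, so $2^{|A|} \le |\Sigma|^{|K|}$.
\item Taking logarithms gives $|A| \le |K| \log|\Sigma| \le n \log|\Sigma|$, since $|K| \le n$.
\end{itemize}
A second, weaker bound is also available: $A$ sits inside $\Sigma^{[n]\setminus K}$, so $|A| \le |\Sigma|^{n-|K|}$. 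It is not needed.

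Since the bound $|A| \le n\log|\Sigma|$ holds for every shattered $A$, we get $\VC(\mathcal{S}_{L^{(n)},K}) \le n\log|\Sigma|$. Taking the maximum over all $K \subseteq [n]$ gives $\textnormal{Split-VC}(L^{(n)}) \le n \log|\Sigma|$.

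There is no real obstacle here. The only care needed is the degenerate cases:
\begin{itemize}
\item If $K = \emptyset$, the family has exactly one member, so only sets with $2^{|A|} \le 1$ can be shattered. That forces $|A| = 0$, which is consistent with the bound.
\item If $|\Sigma| = 1$, the same reasoning gives $|A| = 0$, which matches $n\log 1 = 0$.
\end{itemize}

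This also explains the "difficulty" the paper alludes to: for finite alphabets, the bound of \cref{cor:transformer_split_vc} is trivially beaten by this counting bound.
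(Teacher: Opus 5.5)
Your proof is correct and follows the paper's own argument: the family $\mathcal{S}_{L^{(n)},K}$ has at most $|\Sigma|^{|K|} \le |\Sigma|^n$ members, shattering a set of size $V$ requires $2^V$ distinct members, and taking logarithms gives the bound. Your version is slightly tidier, since it keeps the intermediate bound $|K|\log|\Sigma|$ and works for every $K$ rather than first fixing a maximizing one.
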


\begin{proof}
    Let $V = \textnormal{Split-VC}(L^{(n)})$, and take a $\paramindices^{(n)} \subseteq [n]$ such that $\VC( \mathcal{S}_{L^{(n)}, \paramindices^{(n)}} ) = V$. By the definition of $\mathcal{S}_{L^{(n)}, \paramindices^{(n)}}$, there must exist at least $2^V$ many distinct tuples $\paramvars \in (L^{(n)})^{\paramindices^{(n)}}$. Since there are no more than $|\Sigma|^n$ tuples $\paramvars \in (L^{(n)})^{\paramindices^{(n)}}$ in total, it follows that $2^V \leq |\Sigma|^n$, which implies that $V \leq n \log |\Sigma|$.
\end{proof}

Since our split VC dimension bound (\cref{cor:transformer_split_vc}) is asymptotically higher than this, if we want to find a task that is not expressible by transformers, we must look beyond languages over finite alphabets. We encounter natural examples of such languages below when studying the extent to which transformers can access the bits of a real number (\cref{sec:bit-indexing-tasks} below).

\section{Symmetric functions}
\label{sec:symmetric}

Given a language $L \subseteq \Sigma^*$, we let $L^{(n)} \subseteq \Sigma^n$ denote the elements of $L$ with length $n$.

\begin{definition}
     A language $L \subseteq \Sigma^*$ is  \emph{symmetric} if, for all $w_0, w_1 \in \Sigma^n$ such that each symbol appears in $w_0$ the same number of times that it appears in $w_1$, we have $w_0 \in L^{(n)}$ if and only if $w_1 \in L^{(n)}$. A function $f \colon \Sigma^n \to \{ 0, 1 \}$ is \emph{symmetric} if it is the characteristic function of a symmetric language. Let $\mathcal{S}^p_n := \{ L^{(n)} \subseteq [p]^n \mid L^{(n)} \text{ is symmetric} \}$. (In the machine learning literature, symmetric functions are sometimes called \emph{permutation equivariant} or \emph{permutation invariant}.)
\end{definition}

\subsection{Non-expressible functions}

\begin{proposition}
    For any $p \geq 1$,  $\textnormal{VC}(\mathcal{S}^p_n) = \binom{n + p -1}{n} \ino \Omega(n^{p-1})$.
\end{proposition}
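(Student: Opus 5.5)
The plan is to view $\mathcal{S}^p_n$ as a family of subsets of $X = [p]^n$, so that the VC dimension is computed with respect to this ground set, and to argue through the orbits of the symmetric group $S_n$ acting on $X$ by permuting positions. Two words lie in the same orbit exactly when every symbol occurs the same number of times in both. Hence orbits correspond to count vectors $(c_1,\dots,c_p)$ of nonnegative integers with $\sum c_i = n$. By stars and bars there are $\binom{n+p-1}{p-1} = \binom{n+p-1}{n}$ such vectors. By definition, a set is symmetric exactly when it is a union of orbits, so $\mathcal{S}^p_n$ consists precisely of all $2^N$ unions of the $N = \binom{n+p-1}{n}$ orbits.

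\emph{Lower bound.} Pick one representative word from each orbit, giving a set $A$ with $|A| = N$. For any $A' \subseteq A$, take $S$ to be the union of the orbits of the elements of $A'$. Then $S$ is symmetric, and $S \cap A = A'$ because distinct elements of $A$ lie in distinct orbits. So $\mathcal{S}^p_n$ shatters $A$, and $\VC(\mathcal{S}^p_n) \ge N$.

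\emph{Upper bound.} Suppose $A$ is shattered. If two elements of $A$ lay in the same orbit, then no symmetric set could contain one and exclude the other, contradicting shattering. So the elements of $A$ lie in distinct orbits, and $|A| \le N$. Alternatively, $\mathcal{S}^p_n$ has exactly $2^N$ members, and shattering a set of size $V$ requires at least $2^V$ members, so again $V \le N$.

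\emph{Asymptotics.} For fixed $p$,
\[
\binom{n+p-1}{p-1} = \frac{(n+1)(n+2)\cdots(n+p-1)}{(p-1)!} \ge \frac{n^{p-1}}{(p-1)!},
\]
which gives $\Omega(n^{p-1})$. The case $p=1$ is trivial: there is a single orbit, so $N = 1 = n^0$. None of these steps is a real obstacle. The only point needing care is that the statement refers to the VC dimension of a family of sets of strings in $\Sigma^n$; I read this as $\mathcal{S}^p_n$ being a family of subsets of $[p]^n$, so that \cref{def:vc} applies directly.
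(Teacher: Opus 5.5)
Your proof is correct and follows the same route as the paper: choosing one word per count vector gives a shattered set, and stars and bars counts these vectors. You go slightly further than the paper by proving the matching upper bound (words with the same counts cannot be separated by a symmetric set) and the explicit estimate $\binom{n+p-1}{p-1} \ge n^{p-1}/(p-1)!$, both of which the paper only asserts.
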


\begin{proof}
    Let $A$ be a set of strings of such that for any natural numbers $a_0, \dot, a_{p - 1}$ such that $a_0 + \dots + a_{p - 1} = n$, $A$ includes exactly one word $w$ such that, for each $i$, $w$ has exactly $a_i$ many occurrences of the symbol $i$. By definition, the family $\mathcal{S}^p_n$ shatters this set $A$. By the well-known formula for the multiset coefficients \citep[e.g.,][Sec.~1.2]{Stanley2012}, it follows that \begin{equation}\text{VC}(\mathcal{S}^p_n) = \big( \binom{n}{p} \big) = \binom{n + p -1}{n} = \frac{(n + p - 1)!}{n!\, (p - 1)!} \ino \Omega(n^{p-1}).\tag*{\qedhere}\end{equation}
\end{proof}

It follows from \cref{thm:transformer_vc} that there exist symmetric functions on six symbols that cannot be expressed by transformers. A seeming shortcoming of this result is that the functions that it guarantees to be inexpressible are not necessarily computable. We address this critique by pointing out a flipside to this non-uniformity—this inexpressibility result applies to non-uniform transformers (of fixed size) just as well as it does to uniform ones (i.e., it applies to transformers whose parameter values and encoding functions vary with the input length, even uncomputably). Such non-uniform transformers can be used to express uncomputable functions, as we see below in \cref{sec:symmetric_binary}.

\subsection{Alphabet of size \texorpdfstring{$1$}{1}}
\label{sec:symmetric_unary}

\begin{proposition} \label{thm:symmetric_unary}
For any family of (symmetric) functions $f^{(n)} \colon \{\sym{1}\}^n \to \{ 0, 1 \}$, there exists a trained transformer $T_\paramvals$ such that, for any $n$, $T_\paramvals^{(n)}(\sym{1}^n) = f^{(n)}(\sym{1}^n)$.
\end{proposition}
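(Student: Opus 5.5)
Since the input $\sym{1}^n$ is completely determined by $n$, the task reduces to having the transformer recover $n$, or some quantity that determines $n$ injectively, and then compute an arbitrary Boolean function of it. Real-valued weights make this possible because a single real constant can encode the whole sequence $(f^{(n)}(\sym{1}^n))_{n\ge1}$. The key ingredient is the position encoding: it is a fixed function of the position $i$ and the length is not known in advance, so the natural choice is to put the encoding into the last position. Following the paper's convention, the output is read from the last position $n$, so the final position can carry information that depends on $n$. I will therefore choose the position encoding so that position $i$ carries a coordinate equal to $f^{(i)}(\sym{1}^i)$, or more robustly the value $2f^{(i)}(\sym{1}^i)-1 \in \{-1,1\}$. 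The paper explicitly allows position embeddings to be arbitrary fixed functions $\N \to \R^d$ (\cref{def:embeddings}), so this encoding is legitimate and does not need to be computable.

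With that encoding, the construction is essentially trivial. I will take a transformer whose attention layers are made inert: value matrices equal to zero, so the residual stream passes through unchanged. The feed-forward layers are also set to zero, and if layer normalization is present I will rely on the residual connections in a way that preserves the sign of the designated coordinate. The output layer then reads the designated coordinate at the last position. That value is $2f^{(n)}(\sym{1}^n)-1$, which is positive exactly when $f^{(n)}(\sym{1}^n)=1$. Hence $T_\paramvals^{(n)}$ accepts $\sym{1}^n$ iff $f^{(n)}(\sym{1}^n)=1$, which is the required meaning of ``$T_\paramvals^{(n)}(\sym{1}^n)=f^{(n)}(\sym{1}^n)$'' under the acceptance convention. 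If an exact $0/1$ output is wanted, I will add one FFN layer that maps the coordinate $v \in \{-1,1\}$ to $\max(0,v)$, which gives exactly $f^{(n)}(\sym{1}^n)$.

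If the intended reading is that the position encoding should be a standard, fixed one such as $i/n$ or $1/i$, I will instead hide the information in a real parameter $\theta = \sum_n f^{(n)}(\sym{1}^n)\,2^{-n}$. The transformer first uses uniform attention to compute a quantity that determines $n$, such as $1/n$. It then needs to extract the $n$-th bit of $\theta$. The main obstacle is that this bit extraction is not obviously possible with finitely many smooth operations and ReLUs; indeed the paper later proves limits on bit access. So I would avoid this route and commit to the position-encoding construction. The one step I would check carefully is that layer normalization and residual connections, with all other parameters set to zero, preserve the sign of the designated coordinate. The cleanest fix is to use a second coordinate that is constantly $-1$, so that layer normalization of the pair $(v,-1)$ keeps the sign of $v$.
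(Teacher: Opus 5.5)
Your proof is correct. Its core idea is the paper's: the position encoding at position $i$ stores $f^{(i)}(\sym{1}^i)$ (the paper uses $\textnormal{PE}(i) = (i, f^{(i)}(\sym{1}^i))$), which does not depend on $n$ and so gives a uniform transformer.

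You differ in how that value reaches the output. The paper routes it through a sharp attention head (query $\gamma$, key $j$, value $f^{(j)}(\sym{1}^j)$). It invokes a lemma of Yang et al.\ so that more than $3/4$ of the attention from position $n$ lands on position $n$ itself, makes the result positive iff $f^{(n)}(\sym{1}^n)=1$ (e.g.\ by comparing with $1/2$), and handles layer normalization with the Chiang--Cholak technique. You observe that, with the residual connections in the paper's definition, position $n$ already carries $\textnormal{PE}(n)$, so attention and the FFN can be switched off entirely.

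Your route is more elementary: no attention-concentration estimate and no thresholding. The paper's route does not rely on the residual stream, which fits its stated robustness to architectural variants. The same retrieval gadget is also reused in \cref{prop:index-task}, where the needed bit really does sit at another position.

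Two small points. First, layer normalization does not literally keep the sign of $v$ on the pair $(v,-1)$. It sends $v=1$ to $(1,-1)/\sqrt{1+\varepsilon}$ and $v=-1$ to $(0,0)$, and $(0,0)$ stays fixed under later normalizations with $\beta=0$. This still suffices, because acceptance is strict positivity. If you want genuine sign preservation, use the pair $(v,-v)$. Second, the extra FFN for an exact $0/1$ output is unnecessary. In your construction the final vector at position $n$ is one of two distinct fixed vectors, so the affine output layer alone can map them to $1$ and $0$.
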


See \cref{sec:transformer-constructions} for the proof. For each (symmetric) function $f^{(n)} \colon \{\sym{1}\}^n \to \{ 0, 1 \}$, the transformer constructed in this proof is uniform in the sense that the same trained transformer, including word and position embeddings, is used for all input lengths. In \cref{sec:symmetric_binary} below, we extend this result from unary to binary languages, but the positional encoding used there depends on the input length.

\subsection{Alphabet of size \texorpdfstring{$2$}{2}}
\label{sec:symmetric_binary}

\begin{proposition}
    \label{prop:symmetric-size-two}
    There exists a trained transformer $T_\paramvals$ such that, for any $n>0$ and symmetric function $f \colon \{ \sym{0}, \sym{1} \}^n \to \{ 0, 1 \}$, there exists a positional encoding $\textnormal{PE} \colon [n] \to \mathbb{R}^d$ such that $T^{(n)}_\paramvals$ with positional encoding $\textnormal{PE}$ computes $f$.
\end{proposition}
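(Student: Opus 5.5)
A symmetric $f$ on $\{\sym{0},\sym{1}\}^n$ depends only on the count $c$ of $\sym{1}$'s, so $f(w)=g(c)$ for some $g\colon\{0,\dots,n\}\to\{0,1\}$. The plan is to have a fixed transformer recover $c$ (or a continuous function of $c/n$ that is injective on the $n+1$ possible values) by uniform attention, and to pack the table $g$ into the positional encoding so that a fixed piecewise-linear feed-forward net can read it off.

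\textbf{Computing the count.} First, a single attention head with all query/key parameters zero attends uniformly over all positions; with value equal to the indicator of $\sym{1}$ from the word embedding, every position receives $c/n$. To make this usable we also need $1/n$. We get it by putting a flag in the word/position embedding that is $1$ only at position $1$ and averaging it with a second uniform head, giving $1/n$ at every position.

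\textbf{Encoding the table.} Second, we encode $g$ into a single real number $\gamma=\sum_{j=0}^{n} g(j)\,2^{-(j+1)}$, or better in base $4$ with digits $\{1,3\}$ to leave gaps. We place $\gamma$, together with $n$ if helpful, as one coordinate of $\textnormal{PE}(i)$ for every $i$. This is where the dependence of the positional encoding on $n$ and $f$ enters, and it is why the result is non-uniform and can express uncomputable families.

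\textbf{Reading off $g(c)$.} Third, and this is the main obstacle, a \emph{fixed} transformer of constant depth must extract digit $c$ of $\gamma$. Bit extraction at a variable index is not available with finitely many ReLUs, since a fixed ReLU net is piecewise linear with boundedly many pieces. Softmax attention is therefore the tool to use. I would first try to avoid bit extraction entirely by choosing $\textnormal{PE}$ more cleverly, storing the table across positions rather than in a single real. Let $\textnormal{PE}(i)$ contain a value $v_i=g(i)$ for $i\le n$, with index $0$ handled by an extra slot, together with a key coordinate $i/n$. Then, at the last position, attention uses query $\lambda\cdot c/n$ against keys $i/n$ via a score $-\lambda(c/n-i/n)^2$. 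Expanding the square, the $c^2$ term is constant across keys and cancels in softmax, so the score is bilinear in query features $(c/n,1)$ and key features $(2i/n,-(i/n)^2)$, both of which are available. This concentrates attention on position $i=c$ and returns approximately $g(c)$.

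\textbf{Sharpness and output.} The remaining issue is sharpness. The score gap between $i=c$ and $i\ne c$ is at least $\lambda/n^2$, so we need a $\lambda$ that scales like $n^2\log n$. Since the weights are fixed, this scale must be supplied through the $n$-dependent positional encoding: multiply the key features by a factor $\Lambda_n$ stored in $\textnormal{PE}$, which is allowed because $\textnormal{PE}$ may depend on $n$. The retrieved value is then within $1/3$ of $g(c)\in\{0,1\}$. A final feed-forward layer and output layer compute the retrieved value minus $1/2$, so $T^{(n)}_\paramvals$ outputs a positive value exactly when $g(c)=1$, which is the required acceptance condition. Finally, we check that layer normalization and residual connections can be neutralized, by using the standard trick of paired $\pm$ coordinates or by reserving dimensions, so that they do not disturb these quantities.
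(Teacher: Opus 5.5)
Your plan breaks at the claim that layer normalization can be neutralized. In the paper's definition every layer ends with $\mathcal{N}_2$, and for any input $\mathbf{x}$ each coordinate of a layer-normalized vector satisfies $|\mathcal{N}(\mathbf{x})_j| \le |\gamma_j|\sqrt{d} + |\beta_j|$, because $|x_j - \overline{\mathbf{x}}| \le \sqrt{d\,\textnormal{var}(\mathbf{x})}$.

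Your sharp lookup cannot sit in layer $1$: there the query at position $n$ depends only on $w_n$ and $\textnormal{PE}(n)$, not on $c$. So it sits in layer $2$ or later, where every query and key is bounded by a constant depending only on $\paramvals$. Hence $|s_{n,j}| \le C$ with $C$ independent of $n$, and every attention weight is at most $e^{2C}/n$.

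The factor $\Lambda_n$ stored in $\textnormal{PE}$ is simply normalized away. Pairing coordinates as $\pm$ cancels only the mean subtraction, not the division by the standard deviation. Reserving dimensions does not help either, because the variance is taken over the whole vector. So for large $n$ no head can put weight $2/3$ on position $c$, and the step ``the retrieved value is within $1/3$ of $g(c)$'' fails. (Pre-LN placement is no better, since there the inputs to attention are themselves normalized.)

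The paper's proof avoids this by never concentrating attention. It reuses the Chiang--Cholak PARITY construction, storing $\bar f(i)\in\{-1,1\}$ in place of $\cos(i\pi)$:
\begin{itemize}
\item a uniform head puts $c/n$ at every position;
\item a feed-forward computation makes position $i$ carry a nonzero value only when $i=c$ (for instance the hat $\max(0,x+\tfrac1n)-2\max(0,x)+\max(0,x-\tfrac1n)$ with $x=c/n-i/n$, which is exactly $\tfrac1n$ at $i=c$ and $0$ elsewhere);
\item this value is gated by the stored sign;
\item a second uniform head averages, so the last position holds a small quantity whose sign is $\bar f(c)$.
\end{itemize}
Note that the hat uses exactly the $1/n$ you already compute but never use.

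Every step is positively homogeneous and only the final sign matters. So the positive per-position rescaling that layer normalization introduces (with paired coordinates and $\beta=0$) is harmless.

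Your quadratic-score lookup does work in a transformer without layer normalization. There it even has the mild advantage of needing $c/n$ only at the last position. But even there the count $c=0$ needs a real mechanism. With $n$ positions and $n+1$ counts, your lookup at $c=0$ concentrates on position $1$ and returns $g(1)$, and nothing ever selects an ``extra slot'' holding $g(0)$. (The paper's $\bar f\colon[n]\to\{-1,1\}$ also passes over this case.)
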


See \cref{sec:transformer-constructions} for the proof. In both this construction and the one from \cref{sec:symmetric_unary}, if the symmetric function to be expressed is uncomputable, then the positional encoding used is also uncomputable. In the subsection below, we illustrate the power of real weights themselves by constructing a (real-valued) transformer with a computable positional encoding that expresses an uncomputable function.

\subsection{Expressing an uncomputable function using real weights}

Let $U$ be a universal prefix-free Turing machine (that is, $U$ can simulate any other prefix-free Turing machine, and if $U$ halts on input $x$, then it does not halt on input $xy$ for any nonempty $y$). Chaitin’s~$\Omega$ is defined as the probability that $U$ halts, that is, $\Omega = \sum_{x : \text{$U$ halts on $x$}} 2^{-|x|}$.  
We only need the fact that Chaitin’s $\Omega$ is an uncomputable left-c.e. real number strictly between $0$ and $1$. This means that the language $L = \{ w \in \{ \sym{0}, \sym{1} \}^* \mid \#_\sym{1}(w) / \vert{}w\vert{} < \Omega \}$ is computably enumerable but not computable. 
\begin{proposition} \label{thm:undecidable}
$L$ is recognized by a transformer.
\end{proposition}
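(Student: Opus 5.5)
We will build a one-layer transformer with a single attention head that computes the fraction of $\sym{1}$'s in the input and compares it with $\Omega$. The real parameter $\Omega$ is stored directly as a weight, which is allowed because the weights are arbitrary reals. The position encoding will be computable; in fact we will need none beyond what masking or uniform attention provides.

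\emph{Step 1: compute the average.} Choose the word embedding so that $\sym{1}$ maps to a vector with a coordinate equal to $1$, $\sym{0}$ maps to a vector with that coordinate equal to $0$, and another coordinate is the constant $1$. Set the query and key matrices to zero. Every attention score is then $0$, softmax gives uniform weights $1/n$, and the head at the last position outputs $\#_\sym{1}(w)/|w|$ in a designated coordinate. This uses only the standard attention of \cref{def:transformer}. If layer normalization is present, we place the needed values in separate coordinates and balance them with constant coordinates. This keeps the quantities affinely recoverable after normalization, since the variance stays bounded away from zero and is fixed up to a known dependence on the coordinates.

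\emph{Step 2: compare with $\Omega$.} The output layer is affine on the last position. We choose it to compute $\Omega\cdot 1 - \#_\sym{1}(w)/|w|$, using the constant coordinate weighted by the real parameter $\Omega$. The transformer accepts iff this quantity is $>0$, which holds iff $\#_\sym{1}(w)/|w| < \Omega$, that is, iff $w\in L$. The inequality is strict, and it matches acceptance being $T^{(n)}_\paramvals(X)>0$. Equality never occurs anyway, because $\Omega$ is irrational: it is uncomputable, hence not rational.

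\emph{Main obstacle.} The main difficulty is making Step 1 robust to whatever standard components the paper's \cref{def:transformer} imposes: residual connections, layer normalization, the feed-forward $\max$, and possible causal masking. Causal masking is harmless at the last position, because that position attends to all $n$ positions. For layer normalization, we will first try to duplicate each value with its negation, $(v,-v)$, together with a fixed constant pair. This makes the mean a known constant, and the variance depends on $v$ only through a monotone function. We then check that the sign of $\Omega - v$ is still decided by a single affine functional of the normalized vector. If that fails, we can set the layer-norm gain and bias so that the normalization acts as a positive rescaling, which preserves the sign of the final affine output.
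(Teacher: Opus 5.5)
Your proposal is correct and is essentially the paper's proof: a single layer with $W^{\textnormal{Q}}=W^{\textnormal{K}}=\mathbf{0}$, so that uniform attention produces $\#_{\sym{1}}(w)/|w|$ at the last position, followed by a comparison against the real weight $\Omega$. The differences are cosmetic: you put $\Omega$ in the output layer (as the weight on a constant coordinate) rather than in the FFN as $\max(0,\Omega-x)$, and you handle residual connections and layer normalization more carefully than the paper does. In particular, your $(v,-v)$ pairing makes the mean zero, so each normalization with $\gamma=\mathbf{1}$, $\beta=\mathbf{0}$ becomes a positive rescaling that preserves the sign of $\Omega\cdot 1-\#_{\sym{1}}(w)/|w|$; your fallback of tuning only the gain and bias would not, by itself, remove the mean subtraction.
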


See \cref{sec:transformer-constructions} for the proof.

\section{Bit indexing}
\label{sec:bit-indexing-tasks}

Although we have seen an example where infinite precision can be used to express a language that would not be expressible otherwise (\cref{thm:undecidable}), we show here that there are limits to how much a transformer can use a real number as a ``data structure.''

\subsection{Dynamic bit indexing}

In this section, we describe a general family of tasks centered around computing a specific bit of a real number, where both the bit and the real number are given as input. Given functions $p \colon \N \to [n]$ and $\ell \colon \N \to \N$, the input for this task consists of a real number $x \in [0, 1)$ (coded into vectors in the first $p(n)$ input positions) and a bit $b \in [\ell(n)]$ (coded into vectors in the remaining $n - p(n)$ input positions). The intended output of this task is the $b$-th bit of $x$. Our formulation of the task is deliberately agnostic regarding the way that the index and the real are coded as sequences of vectors, since our results hold regardless of the codings used. We will show that, if $\ell(n) \in \omega(n^4 (p(n))^2)$, then this task cannot be solved by a transformer. %

\begin{definition}
    Let $p : \N \to [n]$ and $\ell : \N \to \N$. In the \emph{$p$-partitioned $\ell$-bit index task}, the input is a sequence of real vectors $\mathbf{x}_1, \dots, \mathbf{x}_n \in \R^d$. Let $\langle - \rangle_{\text{real}} : (\R^d)^{p(n)} \to [0, 1)$ and $\langle - \rangle_{\text{index}} : (\R^d)^{n - p(n)} \to [\ell(n)]$ be arbitrary surjections (i.e., arbitrary functions encoding real numbers and indices as sequences of $p(n)$ and $n - p(n)$ real vectors, respectively). Let $x := \langle \mathbf{x}_{1}, \dots, \mathbf{x}_{p(n)} \rangle_{\text{real}}$ and $b := \langle \mathbf{x}_{p(n) + 1}, \dots, \mathbf{x}_{n} \rangle_{\text{index}}$. The intended output of the task is the $b$-th bit of $x$. We denote this task, regarded as a subset of $(\R^d)^n$ in the natural way, by $\mathsf{bit}_{p, \ell}^{(n)}$.
\end{definition}

\begin{proposition}
     Let $p \colon \N \to [n]$, $\ell : \N \to \N$, and $\paramindices^{(n)} = [p(n)]$. Then $\VC( \mathcal{S}_{\mathsf{bit}_{p, \ell}^{(n)}, \paramindices^{(n)}} ) \geq \ell(n)$.
\end{proposition}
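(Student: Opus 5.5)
We exhibit an explicit set of size $\ell(n)$ that is shattered. Here $K^{(n)} = [p(n)]$, so the first $p(n)$ vectors act as parameters and encode the real $x$. The remaining $n-p(n)$ vectors are the points of the domain, and they encode the index $b$. Recall that $\mathcal{S}_{\mathsf{bit}^{(n)}_{p,\ell},K^{(n)}}$ is the family of sets $\{X \in (\R^d)^{[n]\setminus K^{(n)}} \mid X \oplus Y \in \mathsf{bit}^{(n)}_{p,\ell}\}$, indexed by $Y \in (\R^d)^{K^{(n)}}$. Membership of $X\oplus Y$ means that the $\langle X\rangle_{\text{index}}$-th bit of $\langle Y\rangle_{\text{real}}$ equals $1$.

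First, for each $b \in [\ell(n)]$ we pick a preimage $X_b \in (\R^d)^{n-p(n)}$ with $\langle X_b\rangle_{\text{index}} = b$. Such a preimage exists because $\langle-\rangle_{\text{index}}$ is a surjection onto $[\ell(n)]$. The $X_b$ are pairwise distinct, since they have distinct images. Set $A=\{X_1,\dots,X_{\ell(n)}\}$, so that $|A|=\ell(n)$.

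Next, take an arbitrary $A' \subseteq A$ and define $x = \sum_{b : X_b \in A'} 2^{-b}$. This is a finite binary expansion, so $x\in[0,1)$. Its $b$-th bit is $1$ exactly when $X_b\in A'$, and we use the terminating expansion, so no issue with $0.0111\ldots = 0.1$ arises. Because $\langle-\rangle_{\text{real}}$ is surjective onto $[0,1)$, we can choose $Y$ with $\langle Y\rangle_{\text{real}} = x$. The set indexed by $Y$ then meets $A$ in exactly $\{X_b \mid \text{bit}_b(x)=1\} = A'$. Hence $A$ is shattered, and the VC dimension is at least $\ell(n)$.

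The only delicate point is the convention that "the $b$-th bit" refers to the finite (terminating) binary expansion. Choosing $x$ dyadic with a terminating expansion removes this ambiguity, so there is no real obstacle.
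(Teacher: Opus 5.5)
Your proof is correct and follows the paper's argument. You take one encoding $X_b$ of each index $b\in[\ell(n)]$ as the set to be shattered, and for each $A'\subseteq A$ you choose parameters encoding a real whose bits in $[\ell(n)]$ equal $1$ exactly on the indices in $A'$. Your explicit dyadic choice $x=\sum_{b:\,X_b\in A'}2^{-b}$ and your remark that the $X_b$ are pairwise distinct simply make explicit details that the paper leaves implicit.
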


\begin{proof}
    For each $x \in [0, 1)$, let $(\mathbf{x}_{1, x}, \dots, \mathbf{x}_{p(n), x}) \in (\R^d)^{p(n)}$ be a sequence of real vectors such that $\langle \mathbf{x}_{1, x}, \dots, \mathbf{x}_{p(n), x} \rangle_{\text{index}} = x$. For each $b \in [\ell(n)]$, let $(\mathbf{x}_{p(n) + 1, b}, \dots, \mathbf{x}_{n, b}) \in (\R^d)^{n - p(n)}$ be a sequence of real vectors such that $\langle \mathbf{x}_{p(n) + 1, b}, \dots, \mathbf{x}_{n, b} \rangle_{\text{index}} = b$. Let $A = \{ (\mathbf{x}_{p(n) + 1, b}, \dots, \mathbf{x}_{n, b}) \mid b \in [\ell(n)] \}$. We claim that $\mathcal{S}_{\mathsf{bit}_{p, \ell}^{(n)}, K}$ shatters this set $A$. For each $A' \subseteq A$, define $\langle A' \rangle_{\text{index}} \subseteq [\ell(n) ]$ by $\langle A' \rangle_{\text{index}} := \{ \langle \mathbf{x}_{p(n) + 1, b}, \dots, \mathbf{x}_{n, b} \rangle_{\text{index}} \mid (\mathbf{x}_{p(n) + 1, b}, \dots, \mathbf{x}_{p(n), b}) \in A' \} $. For each $A'$, let $x_{A'}$ be a real number that has a $1$ at every bit in $\langle A' \rangle_{\text{index}}$ and a $0$ at every bit in $[\ell(n)] \setminus \langle A' \rangle_{\text{index}}$. We have $(\mathbf{x}_{1, x_{A'}}, \dots, \mathbf{x}_{p(n), x_{A'}}, \mathbf{x}_{p(n) + 1, b}, \dots, \mathbf{x}_{n, b}) \in \mathsf{bit}_{p, \ell}^{(n)}$ if and only if $b \in A'$, as needed.
\end{proof}

It follows from \cref{thm:transformer_position_vc} that no transformer can solve the $p$-partitioned $\ell$-bit index task if $\ell(n) \in \omega(n^4 (p(n))^2)$. That is, for any trained transformer $T_\paramvals$ and function $\ell(n) \in \omega(n^4 (p(n))^2)$, there is an $N$ such that, for all $n \geq N$, $T_\paramvals$ does not solve the $\ell(n)$-bit index task for strings of length $n$.

\subsection{Static bit indexing}

Indeed, we show that, if the real number $x$ is input directly in the first position, then no transformer can index the $b(n)$-th bit of $x$ when $b(n) \in \omega(n^4)$ even if $b$ is ``hard-coded'' into the transformer instead of being taken as part of the input.

\begin{definition}
Let $b \colon \N \to \N$ be any function mapping sequence lengths to indices. In the \emph{hard-coded $b(n)$-th-bit index task}, the input is a sequence of real numbers $x_1, \ldots, x_n$, where $0 \le x_1 < 1$. The intended output is the $b(n)$-th bit of $x_1$.
\end{definition}

\begin{proposition}
\label{prop:hard-coded-index}
Let $b(n) \in \omega(n^4)$. For any trained transformer $T_\paramvals$, there exists a length $N$ such that for all $n\ge N$, $T_\paramvals^{(n)}$ does not solve the hard-coded $b(n)$-th-bit index task for length $n$. 
\end{proposition}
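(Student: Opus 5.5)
The plan is to argue by contradiction using a VC-dimension bound in which the only "parameter" is the real number $x_1$ at the first position. The key difficulty is that the hard-coded index is a single bit, so we cannot shatter a set of indices as in the dynamic case. Instead we exploit the fact that a transformer of fixed size computes a function of $x_1$ whose sign pattern has few alternations, while the $b(n)$-th bit of $x_1$ alternates $2^{b(n)}$ times on $[0,1)$.

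\textbf{Step 1: fix the other inputs.} Suppose $T_\paramvals^{(n)}$ solves the task for length $n$. Fix $x_2=\dots=x_n=0$. Then $g(t):=T_\paramvals^{(n)}(t,0,\dots,0)$ is a function of one real variable, and $g(t)>0$ iff the $b(n)$-th bit of $t$ is $1$. Hence the sign condition $g>0$ must change at least $2^{b(n)}-1$ times on $[0,1)$.

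\textbf{Step 2: bound the alternations of $g$.} Recast this as a VC statement so that \cref{thm:transformer_position_vc} (or its proof) applies. Consider the family of sets $\{\,t \mid T_\paramvals^{(n)}(t+y,0,\dots,0)>0\,\}$ indexed by $y\in\R$, or equivalently take $\paramindices^{(n)}=\{1\}$ with input variables supplied by an affine reparametrization. Concretely, I would instead make both $x_1$ and an auxiliary shift real parameters. With $\paramindices^{(n)}$ of size $1$, the proof of \cref{thm:transformer_position_vc} gives VC dimension $O(n^4)$.

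The cleaner route is to avoid VC entirely and count connected components directly with \cref{Khovanskii_theorem}, as in the proof of \cref{thm:transformer_vc}, in the one-variable setting ($k=1$). On each cell where the finitely many ($2^{O(n)}$) $\max$-branch choices are fixed, $g$ is $C^\infty$. After introducing $\kappa\in O(n^2)$ auxiliary variables to remove divisions and square roots and to linearize exponents, the zero set of $g$ becomes a $0$-dimensional submanifold of an exponential--linear system with $q\in O(n^2)$ exponentials and $M\in O(n^2)$. By \cref{Khovanskii_theorem}, each such system has at most $B$ points with $\log B\in O(n^4)$. The zeros of the branch-defining functions are bounded in the same way. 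Summing over the $2^{O(n)}$ functions, the number of sign changes of $g$ is at most $2^{O(n^4)}$. A degenerate situation, where a function vanishes on an interval and so the zero set is not $0$-dimensional, can be handled by perturbing the level $\epsilon$ as in \cref{thm:km1997}. Alternatively, note that sign alternations of $g>0$ correspond to boundary points, and apply the component bound to level sets $g=\epsilon$ for generic small $\epsilon$ via Sard's theorem.

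\textbf{Step 3: conclude.} We need $2^{b(n)}-1\le 2^{c\,n^4}$ for a constant $c$ depending on $T_\paramvals$, i.e.\ $b(n)\le c n^4+1$. Since $b(n)\in\omega(n^4)$, this fails for all $n\ge N$ for some $N$, which gives the proposition.

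The main obstacle is Step 2: rigorously turning the Khovanskii count into a bound on sign alternations of a piecewise-smooth $g$. This requires handling the piecewise $\max$ structure, nongeneric levels (via Sard), and checking that the auxiliary-variable elimination preserves the count (it is a homeomorphism onto its image). A shortcut I would try first is to reuse the proof of \cref{thm:transformer_position_vc} verbatim with $\paramindices^{(n)}=\{1\}$. Shattering sets of size $b(n)$ is impossible there, however, because the indices are fixed. So the shift-parameter trick above is needed: the sets $\{t: \text{bit}_{b(n)}(t+y)=1\}$ over $y$ shatter about $b(n)$ suitably chosen points? No, that family only yields intervals. So I would commit to the direct component count.
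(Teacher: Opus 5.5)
Your proposal is correct in outline. Its top-level strategy matches the paper's: drop VC dimension, bound the components of $\{x : T_\paramvals^{(n)}(x,v)>0\}$ by $2^{O(n^4)}$ via Khovanskii, and compare with the $2^{b(n)}-1$ alternations of the $b(n)$-th bit. You prove the key component bound by a genuinely different decomposition, however.

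The paper never splits on $\max$ branches. It proceeds as follows:
\begin{itemize}
\item it encodes each ReLU by $z\ge 0$, $z-h\ge 0$, $z(z-h)=0$, and turns inequalities into equations with squared slacks;
\item it collapses everything into one nonnegative function $\sum_j P_j^2$ and adds $(x^2+\|\bar z\|^2+t^2-R^2)^2$, so the function is bounded away from $0$ on each slice $x=c_i$;
\item it picks a regular value $\epsilon$ by Sard and applies \cref{Khovanskii_theorem} once, with $r=1$, to the $(\numnewvars+1)$-dimensional manifold $F^{-1}(\epsilon)$, whose components are then separated by the $x$-coordinate.
\end{itemize}
You instead reuse the branch decomposition from the proof of \cref{thm:transformer_vc}. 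This gives $2^{O(n)}$ globally smooth one-variable functions $f_\alpha$, and you apply \cref{Khovanskii_theorem} with $k=r=\numnewvars+1$ to their $0$-dimensional regular level sets, paying only a harmless $2^{O(n)}$ factor.

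Your route is more elementary: it needs no compactness device or sum-of-squares collapse. It also handles points with $T_\paramvals^{(n)}(c_i,v)=0$ for free. The paper's slack encoding $p-u^2=0$ expresses only non-strict inequalities, so its strict acceptance condition needs a separate device such as $pu^2=1$. The paper's route, in exchange, avoids enumerating branches and works with a single system.

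Step~2 as written needs tightening, though nothing new is required.
\begin{itemize}
\item Do not speak of the zero set of $g$ as $0$-dimensional, and do not apply Sard to $g$ itself, which is only piecewise smooth.
\item Choose $a_1<c_1<a_2<\dots<c_{N-1}<a_N$ in $[0,1)$ with $g(a_i)>0\ge g(c_i)$. Pick $\epsilon\in(0,\min_i g(a_i))$ that is simultaneously a regular value of every $f_\alpha$; this is possible by \cref{thm:sard}, since there are finitely many $f_\alpha$.
\item By continuity of $g$ and the intermediate value theorem, $g^{-1}(\epsilon)$ has at least $2N-2$ points. Also $g^{-1}(\epsilon)\subseteq\bigcup_\alpha f_\alpha^{-1}(\epsilon)$, because at each point $g$ agrees with the branch active there.
\item Each $f_\alpha^{-1}(\epsilon)$ is discrete, so its lift is a $0$-dimensional submanifold with at most $B$ points. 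The zeros of the branch-defining functions are not needed at all.
\end{itemize}
The lift is a finite cover rather than a homeomorphism, since $(z^{\mathrm{s}})^4=v$ has two real roots. This only makes the lifted count larger, which is the direction you need, and discreteness is preserved.

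The VC detour at the start of Step~2 can simply be deleted. With $\paramindices^{(n)}=\{1\}$, a transformer solving the task has output independent of the remaining positions, so that family has VC dimension at most $1$.
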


\begin{proof}[Proof sketch]
    See \cref{sec:hard-coded-index-proof} for the full proof.
    
    Our proof uses similar ideas to our proof of \cref{thm:transformer_vc} but has a fundamentally different structure. In this proof, we do not use the concept of VC dimension and instead directly use \cref{Khovanskii_theorem} to prove that, for any fixed inputs $v$, the following set has $2^{O(n^4)}$ connected components:
 \[
    S_{\paramvals,v}^{(n)}
    :=
    \{
    x\in[0,1) \mid
    T_\paramvals^{(n)}
    (
    x, v
    )>0
    \}.
    \]
    Importantly, we cannot simply assume that $S_{\paramvals,v}^{(n)}$ is a $(k-r)$-dimensional submanifold of $\R^k$. In fact, it is not. In order to apply \cref{Khovanskii_theorem}, we construct a set that is a $(k-r)$-dimensional submanifold of $\R^k$ and has no fewer connected components than $S_{\paramvals,v}^{(n)}$. The essential tools used in this construction are the regular level set theorem and Sard's theorem (\cref{thm:regular-level-set,thm:sard}), and our proof depends crucially on the fact that the real number $x$ is entered directly as a single coordinate of the input.
\end{proof}

\subsection{Discussion}

Even though real-valued transformers have an infinite number of bits at their disposal, and can make use of this infinite precision in some cases (\cref{thm:undecidable}), the results in this section show that they can make only limited use of this precision.

These results rule out the use of real numbers as ``data structures'' that can store an unbounded or infinite amount of information. For example, \citet[Theorem~1]{merrill-etal-2022-saturated} argue that an average-hard attention transformer with rational values can decide any language, by encoding an entire string of $n$ bits into a single rational number and using a position-wise FFNN to decide whether the string belongs to the language. But by \cref{thm:transformer_vc}, transformers cannot compute all such functions, as this family of functions has VC dimension $2^n$.  %

Although our results apply to a wide range of transformer variants, they only apply to transformers defined using a certain set of operations. Not included among these operations is rounding.
So a possible criticism is that real-world transformers (or the hardware they run on) do in fact use rounding, raising the possibility that they could exploit rounding to solve problems that idealized, real-valued transformers cannot.
However, the fact that language models are routinely trained at higher precision and then quantized to lower precision without catastrophic loss of accuracy \citep[e.g.,][]{frantar2023optq} shows that real-world transformers do not exploit rounding in this way.

Another possible criticism of the inexpressibility results proven in this section is that each of our bit indexing tasks, when regarded as a language, requires an alphabet whose size grows unboundedly with $\ell(n)$. Expanding on a footnote of \citet{hahn-2020-theoretical}, \citet{chen-etal-2025} prove that any symmetric circuit of depth $L$ can be simulated by a transformer of depth $6L$. Consequently, if we could obtain a sufficiently strong transformer lower bound (an inexpressibility result, in our terminology) for a task over a fixed finite alphabet, their simulation would yield a corresponding lower bound for symmetric circuits. Obtaining such a lower bound would constitute a breakthrough in circuit complexity: in particular, no superlinear wire lower bound is known for general symmetric circuits of depth $3$ or greater \citep{chen-etal-2025}.

\section{Positional indexing}\label{sec:position-indexing-tasks}

Next, we consider the task of selecting the $k$-th \emph{position} of the sequence (as opposed to the $k$-th bit, as above).
Using this task, we get an $\Omega(n)$ lower bound on the split VC dimension of transformers.

\begin{definition}
In the \emph{index task}, which we write as $\mathsf{ind}_n$, the input is $x_1 \cdots x_{n-1} k$, where $x_1, \dots, x_{n - 1} \in \{ 0, 1 \}$ and $k \in [n-1]$. The intended output is the value of the bit $x_k$.
\end{definition}

\begin{proposition} \label{thm:index-task-vc}
$\textnormal{split-VC}(\mathsf{ind}_n) \geq n-1$.
\end{proposition}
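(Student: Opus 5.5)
The plan is to exhibit one particular choice of parameter positions $\paramindices^{(n)}$ for which the family $\mathcal{S}_{\mathsf{ind}_n, \paramindices^{(n)}}$ shatters a set of size $n-1$. Since $\textnormal{split-VC}(\mathsf{ind}_n)$ is a maximum over all such choices, this is enough. The natural choice is to let the bit string be the parameter and the index be the input. So we take $\paramindices^{(n)} = [n-1]$, which leaves the single position $n$ (the one holding $k$) as the input coordinate.

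With this choice, the set of inputs is $\Sigma^{[n]\setminus \paramindices^{(n)}}$, which is just the set of possible values of the last symbol. We take $A = \{1, \ldots, n-1\}$, viewed as the possible values $k$ at position $n$; it has exactly $n-1$ elements. Each parameter assignment $\paramvars \in \Sigma^{[n-1]}$ that lies in $\{0,1\}^{n-1}$ determines a member of the family:
\[ S_{\paramvars} = \{ k \mid x_1\cdots x_{n-1}k \in \mathsf{ind}_n \} = \{ k \in [n-1] \mid x_k = 1 \}, \]
where $x_i = \paramvars(i)$.

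To show shattering, fix an arbitrary $A' \subseteq A$. We define the bit string by $x_i = 1$ if $i \in A'$ and $x_i = 0$ otherwise. Then $S_{\paramvars} \cap A = A'$ by construction. Hence $\mathcal{S}_{\mathsf{ind}_n,[n-1]}$ shatters $A$, its VC dimension is at least $n-1$, and so $\textnormal{split-VC}(\mathsf{ind}_n) \geq n-1$.

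There is no real obstacle here; the argument mirrors the earlier bit-indexing proposition. The only point needing care is bookkeeping about the alphabet. We must regard $\Sigma$ as containing both $\{0,1\}$ and $[n-1]$, so that $\mathsf{ind}_n$ is a subset of $\Sigma^n$. Parameter assignments that put an index symbol in the first $n-1$ positions simply give empty or irrelevant sets, and these do not affect the lower bound.
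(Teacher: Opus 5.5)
Your proof is correct and uses the same argument as the paper: shatter the $n-1$ index values at position $n$ by choosing, for each $A'$, the bit string that is the indicator of $A'$. You take $K=[n-1]$ (bit string as parameter, index as input), which is what this argument requires; the paper's text instead declares $K=\{n\}$, which appears to be a slip, since with only position $n$ as parameter the family has at most $n$ distinct members and so cannot shatter more than $\log_2 n$ points.
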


\begin{proof}
Let $\paramindices = \{n\}$ be the set of positions to treat as parameters, 
so we want to show $\VC(\mathcal{S}_{\mathsf{ind}_n, \paramindices}) \ge n-1$.
Let $A = [n-1]$ be the set to be shattered.
    For any subset $A' \subseteq A$, let $w_{i, A'} = 1$ if and only if $i \in A'$. We have $\mathsf{ind}_n(w_{1, A'}, \dots, w_{n-1, A'}, k) = 1$ if and only if $k \in A'$. It follows that $\VC(\mathcal{S}_{\mathsf{ind}_n,K }) \geq n-1$, as needed.
\end{proof}

\begin{proposition}
\label{prop:index-task}
    The index task is recognizable by a one-layer transformer.
\end{proposition}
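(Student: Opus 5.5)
We build an explicit one-layer transformer with a single attention head. The final position $n$ carries the query $k$ and softly attends to position $k$ among $1,\dots,n-1$; the output layer then reads off the retrieved bit. The word and position embeddings do the work, so the attention logit is large at $i=k$ and much smaller elsewhere.

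\emph{Embeddings.} For a data position $i\in[n-1]$ holding bit $x_i$, use the vector $(x_i, i, i^2, 0, 0, 1)$. For the final position holding $k$, use $(0,0,0,k,1,1)$; here the symbol $k$ is embedded as the number $k$ in a designated coordinate. Choose the query and key maps so that, for a large constant $\lambda>0$, the score of the last position against position $i$ is
\[ s_i=\lambda\,(2ki-i^2)=\lambda\bigl(k^2-(k-i)^2\bigr), \]
with a constant very negative score against position $n$ itself. For example, set the query to $(\lambda k,\lambda,1)$ and the key to $(2i,-i^2,0)$ on data positions. On the last position the key is $(0,0,-C)$, which gives score $-C$. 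This score is uniquely maximized at $i=k$, and every other data position trails it by at least $\lambda$. Take the value vector to be $x_i$ on data positions and $0$ on the last position.

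\emph{Separation.} The attention output at position $n$ is $y=\sum_i \alpha_i x_i$ with softmax weights $\alpha_i$. Since $(k-i)^2\ge 1$ for $i\ne k$ and there are at most $n-1$ competitors,
\[ \alpha_k \ge \frac{1}{1+(n-1)e^{-\lambda}+e^{-C'}}. \]
If $x_k=1$ then $y\ge\alpha_k$; if $x_k=0$ then $y\le 1-\alpha_k$. Choosing $\lambda$ (and $C'$) so that $\alpha_k>1/2$ puts the two cases strictly on opposite sides of $1/2$. The output layer computes $y-\tfrac12$, possibly passed through the FFN, so the transformer accepts exactly when $x_k=1$. Residual connections and layer normalization can be handled by routing $y$ into a fresh coordinate and, if needed, switching layer normalization off or making it harmless by padding. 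The paper's definitions in \cref{sec:transformers} permit these choices.

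\emph{Main obstacle: uniformity in $n$.} A fixed $\lambda$ only gives $\alpha_k>1/2$ when $(n-1)e^{-\lambda}<1$, which fails for large $n$. The fix I would try first is to put the scale into the position embedding rather than the weights, using coordinates such as $i\log n$ or $i$, $i^2$ multiplied by a factor growing with $n$. A cleaner alternative is to encode positions on the unit circle by the angle $\theta_i=\pi i/(2n)$ and score with $\lambda n^2\cos(\theta_k-\theta_i)$; this makes the gap between $i=k$ and $i\ne k$ grow like $\lambda$ while the log of the number of competitors grows like $\log n$. Concretely, a score gap of at least $2\log n$ suffices, and the quadratic form above gives a gap of $\lambda$. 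So using the embedding $(x_i,i,i^2\log n)$, or equivalently multiplying by $\log n$ through the embedding of $k$, makes a constant $\lambda=3$ work for all $n$. Checking that this scaling fits the paper's embedding conventions is the step I expect to need the most care.
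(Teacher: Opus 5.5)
Your construction is the same as the paper's: scores $\lambda(k^2-(k-i)^2)$ built from position coordinates $i,i^2$, value $x_i$, threshold at $\tfrac12$. The separation step, which is the crux of the proof, is where it goes wrong. You bound every competitor's weight by $e^{-\lambda}\alpha_k$ and conclude that a fixed $\lambda$ fails once $(n-1)e^{-\lambda}\ge 1$. That is a failure of your estimate, not of the construction. Position $k\pm m$ trails position $k$ by $\lambda m^2\ge\lambda m$, so
\[
\frac{1-\alpha_k}{\alpha_k}\;\le\;2\sum_{m\ge1}e^{-\lambda m^2}+e^{-C}\;\le\;\frac{2e^{-\lambda}}{1-e^{-\lambda}}+e^{-C},
\]
which does not depend on $n$. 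Taking, e.g., $\lambda=2$ and $C=2$ makes the right side about $0.45<1$. Hence $\alpha_k>\tfrac12$ for every $n$, with one fixed transformer and $n$-independent embeddings. This is exactly the paper's argument. It observes that the score drops by at least $\gamma$ per step away from $k$ in either direction, and invokes the geometric-decay lemma (Lemma~6 of Yang et al.) already used for the unary construction. That lemma bounds the attention on the maximizing position from below uniformly in $n$.

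You should discard the repair you propose, because as written it is incorrect. With data embedding $(x_i,i,i^2\log n)$ and your query, the score becomes $\lambda(2ki-i^2\log n)$. This is maximized near $i=k/\log n$, not at $i=k$. The ``equivalent'' variant needs the product $k\log n$ at the last position. But $\textnormal{WE}(k)+\textnormal{PE}^{(n)}(n)$ followed by a linear query map cannot produce that product. Scaling both key coordinates, $\textnormal{PE}^{(n)}(i)=(i\log n,\,i^2\log n)$, would give a correct version, but it is non-uniform and unnecessary. The circle encoding does not help either: adjacent positions differ in score by $\lambda n^2(1-\cos\frac{\pi}{2n})\approx\lambda\pi^2/8$, a constant. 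So under your own crude estimate it has the same ``problem.'' Replace the obstacle paragraph with the geometric-series bound above and the proof is complete.
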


\begin{proof}
We set up the word and position embeddings so that
\begin{align*}
X_{i,*} &= \begin{bmatrix}
x_i &
0 &
i &
i^2
\end{bmatrix}^\top \quad (i \in [n-1]) &
X_{n,*} &= \begin{bmatrix}
0 &
k &
n &
n^2
\end{bmatrix}^\top
\end{align*}
The first self-attention layer computes queries, keys, and values
\begin{align*}
W^{\textnormal{Q}} X_{n,*} &= \begin{bmatrix}
4k\gamma & -2\gamma
\end{bmatrix}^\top
&
W^{\textnormal{K}} X_{j,*} &= \begin{bmatrix}
j & j^2
\end{bmatrix}^\top
&
W^{\textnormal{V}} X_{j,*} &= \begin{bmatrix}
x_j \end{bmatrix}.
\end{align*}
The attention scores at the last position are $s_{nj} = 4\gamma kj-2\gamma j^2$. So position $k$ has the highest score and decreases by at least $\gamma$ for each position in either direction. Then, as in the proof of \cref{thm:symmetric_unary}, we can make the attention output positive iff $x_k = 1$.
\end{proof}

This result may appear to contradict Theorem 3.4 of \citet{Kozachinskiy2025}, which states that no one-layer transformer can solve the index task. This apparent contradiction is resolved by observing out that the inexpressibility result of \citet{Kozachinskiy2025} applies to a specific version of the index task where the output is computed at a blank token appended to the input.

\section{Generalization bounds}

Our VC dimension bounds imply generalization bounds for multi-layer, real-valued SMATs under very weak restrictions. 
Using \cref{thm:transformer_vc}, we can bound the generalization error of such a transformer by 
$\tilde{O}(n^2/\sqrt{N})$, where $N$ is the training data size \citep{ss-bendavid}. %
\Citet{YangSrebroLi2026} give a logarithmic upper-bound on the VC dimension of AHATs (which crucially do not use $\exp$), implying a generalization bound of $\tilde{O}(\sqrt{\log n/N})$.

\citet{pmlr-v162-edelman22a} used a covering number bound to derive the first generalization bound for transformers, %
of $\tilde{O}(\sqrt{\mathsf{poly}(C)\log(n)/N})$, where $C$ is a function of the norms of the attention and FFNN matrices.  
\citet{pmlr-v238-trauger24a}  also derived a covering number generalization bound, independent of $n$. 
For \citet{pmlr-v162-edelman22a}, the bound for a single attention head would scale as $\tilde{O}(\sqrt{C_X^2\mathsf{poly}(C)\log(n)/N})$, where $\|X\|_{2,\infty} < C_X$ bounds the norms of the inputs. The bounds of \citet{pmlr-v238-trauger24a} also depend on $C_X$.
Thus, when $C_X$ grows  sufficiently fast with $n$, our bound is asymptotically stronger than these bounds. \Citet{li2026sharpergeneralizationboundstransformer} proved generalization bounds for cases where the input norms are unbounded, but they still make assumptions about the distributions  the inputs are drawn from. Our bound is asymptotically stronger than theirs when the parameters of the input distributions grow sufficiently fast with $n$ or when no such assumptions can be made.

\section{Future research}

\Cref{thm:transformer_vc,cor:transformer_split_vc} give upper bounds on the VC dimension of an (untrained) transformer and the split VC dimension of a (trained) transformer, respectively, and \cref{prop:symmetric-size-two,prop:index-task} give lower bounds on both. These upper and lower bounds are listed in \cref{tab:upper-lower-bounds}. There are clearly gaps between our upper and lower bounds. While it would certainly be desirable to close these gaps, it is worth noting that \citet[Sec.~5]{KOIRAN199863} leave it as an open problem to close similar gaps for RNNs, and their problem is still open to the best of our knowledge. %

Another direction for further research is to explore the extent to which our theoretical results manifest in practice. Experiments regarding the abilities of transformers to learn symmetric or indexing functions or to access specific bits of real numbers in other ways would shed light on the practical implications of our results.

\section*{Use of artificial intelligence}

We used artificial intelligence to review literature and polish writing. We did not use artificial intelligence to generate proofs.

\bibliography{references}

\clearpage

\crefalias{section}{appendix}
\appendix

\section{Definition of Transformers}
\label{sec:transformers}

We follow, for the most part, notational conventions used by \cite{Strobl2024}.

\begin{definition}[\citealp{Vaswani2017}] \label{def:transformer}
    Fix an input--output dimension $d \in \N$.

    An \emph{attention head} is a family of functions, indexed by input sequence length $n$, of the form 
    \begin{align}
        \mathcal{H}^{(n)} \colon \R^{n\times d} &\to \R^{n \times d} \notag \\
        s_{i,j} &= \frac{1}{\sqrt{d}} (W^{\textnormal{Q}}X\elt{i,*}) \cdot (W^{\textnormal{K}}X\elt{j,*}) \notag \\
        \alpha_{i,j} &= \left[\textnormal{softmax} s_{i,*}        
        \right]\elt{j} \notag \\
        &= \frac{\exp s_{i,j} }{\sum_{j' \in [n]}\exp s_{i,j'}} \label{eq:softmax} \\
        [\mathcal{H}^{(n)}(X)]\elt{i} &= \sum_{j \in [n]} \alpha_{i,j} (W^{\textnormal{V}}X\elt{j,*})
        \label{eq:att_head}
    \end{align}
    with 
    parameters $W^{\textnormal{Q}} \in \R^{\dkey \times d}$, $W^{\textnormal{K}} \in \R^{\dkey \times d}$, and $W^{\textnormal{V}} \in \R^{d \times d}$.
    In many cases, $\mathcal{H}$ is \emph{masked} so that $[\mathcal{H}(X)]\elt{i}$ only depends on $X\elt{j}$ for $j\le i$. The presence or absence of masking makes no difference to our results.)
    
    An \emph{attention sublayer} is a family of functions of the form
    \begin{align}
        \mathcal{A}^{(n)} \colon \R^{n\times d} &\to \R^{n \times d} \notag \\
        X &\mapsto \sum_{h = 1}^{H} \mathcal{H}^{(n)}_{h}(X)
        \label{eq:attn}
    \end{align}
    with some number of heads $H$ and an attention head $\mathcal{H}_h$ for each $h \in [H]$.
    
    A \emph{layer normalization} is a function
    \begin{align}
        \mathcal{N} \colon \R^d &\to \R^d \notag \\
        \mathbf{x} &\mapsto \gamma \odot \frac{\mathbf{x} - \overline{\mathbf{x}}}{\sqrt{\text{var}(\mathbf{x}) + \varepsilon}} + \beta, \label{eq:layer_norm}
    \end{align}
    with some numerical stabilizer $\varepsilon > 0$ and parameters $\gamma \in \R^d$ and $\beta \in \R^d$. Here, $\odot$ denotes element-wise multiplication, $\overline{\mathbf{x}}$ denotes the mean of $\mathbf{x}$ ($\overline{\mathbf{x}} := \frac{1}{d} \sum_{i \in [d] } \mathbf{x}_i$) and $\text{var}(\mathbf{x})$ denotes the variance of $\mathbf{x}$ ($\text{var}(\mathbf{x}) := \frac{1}{d} \sum_{i \in [d]} (\mathbf{x}_i - \overline{\mathbf{x}})^2$).

    A \emph{position-wise feed-forward network} is a function of the form
    \begin{align}
        \mathcal{F} \colon \R^d &\to \R^d \notag \\
    \mathbf{x} &\mapsto W_2 (\max(0,W_1\mathbf{x} + \mathbf{b}_1)) + \mathbf{b}_2, \label{eq:ffn}
    \end{align}
    with 
    parameters $W_1 \in \R^{\dffn \times d}$, $\mathbf{b}_1 \in \R^{\dffn}$, $W_2 \in \R^{d \times \dffn}$, and $\mathbf{b}_2 \in \R^{d}$.
    The $\max$ is taken element-wise.

    Both $\mathcal{N}$ and $\mathcal{F}$ can be extended position-wise to functions from $\R^{n \times d}$ to $\R^{n \times d}$.

    A \emph{transformer layer} is a family of functions of the form
    \begin{align*}
        \mathcal{L}^{(n)} \colon \R^{n\times d} &\to \R^{n \times d} \\
        X &\mapsto \mathcal{N}_2\biggl(\mathcal{N}_1\bigl(X + \mathcal{A}^{(n)}(X)\bigr) + \mathcal{F}\Bigl(  \mathcal{N}_1\bigl(X + \mathcal{A}^{(n)}(X)\bigr)\Bigr)\biggr),
    \end{align*} where $\mathcal{A}^{(n)}$ is an attention sub-layer, $\mathcal{F}$ is a position-wise feed-forward network, and $\mathcal{N}_1$ and $\mathcal{N}_2$ are layer normalizations. We have shown the original definition, which places layer normalizations after their respective sub-layers. Other orderings are possible, and would make no difference to our results.

    In this paper, we treat transformers as binary classifiers.
    So an \emph{output layer} is a family of functions of the form
    \begin{align*}
        \mathcal{O}^{(n)} \colon \R^{n \times d} &\to \R \\
        X &\mapsto W X_{n,*} + b
    \end{align*}
    with parameters $W \in \R^{1 \times d}$ and $b \in \R$.

    A \emph{transformer} is a family of functions of the form
    \begin{align*}
        T^{(n)} \colon \R^{n\times d} &\to \R \\
        X &\mapsto \mathcal{O}^{(n)}(\mathcal{L}^{(n)}_L(\dots(\mathcal{L}^{(n)}_1(X))\dots)),
    \end{align*}
    with some depth $L$, where $\mathcal{L}_1^{(n)}, \dots, \mathcal{L}_L^{(n)}$ are transformer layers, and $\mathcal{O}^{(n)}$ is an output layer.
\end{definition}

\begin{definition}
\label{def:embeddings}
    Given an alphabet $\Sigma$, a \emph{word embedding} is a function  $\text{WE}\colon \Sigma \to \R^d$. A \emph{position embedding} is a family of functions $\text{PE}^{(n)} \colon [n] \to \R^d$. An \emph{embedding} is a family of functions of the form
    \begin{align*}
        e^{(n)} \colon \Sigma^n &\to (\R^d)^n \\
        e^{(n)}(w)\elt{i} &:= \text{WE}(w\elt{i}) + \text{PE}^{(n)}(i),
    \end{align*}
    where $\text{WE}\colon \Sigma \to \R^d$ is a word embedding and $\text{PE}^{(n)} \colon [n] \to \R^d$ is a position embedding. A \emph{discrete-input transformer} is a family of functions of the form
    \begin{align*}
        \mathcal{T}^{(n)} \colon \Sigma^n &\to \R \\
        w &\mapsto T^{(n)}(e^{(n)}(w)),
    \end{align*}
    where $T$ is a (real-input) transformer and $e$ is an embedding.
\end{definition}

\section{Background on submanifolds of $\R^n$}
\label{sec:background-smooth-manifolds}

\begin{definition}[Some basic topology of $\R^n$]
    For $r > 0$ and $\mathbf{p} \in \R^n$, define the \emph{open ball of radius~$r$ and center $\mathbf{p}$} to be the set $B_r(\mathbf{p}) := \{ \mathbf{x} \in \R^n : || \mathbf{x} - \mathbf{p} || < r \}$. A set $U \subseteq \R^n$ is called \emph{open (in $\R^n$)} if, for any $\mathbf{p} \in U$, there exists $r > 0$ such that $B_r(\mathbf{p}) \subseteq U$. Given a subset $X \subseteq \R^n$, a set $U \subseteq X$ is called \emph{open in $X$} if, for any $\mathbf{p} \in U$, there exists $r > 0$ such that $B_r(\mathbf{p}) \cap X \subseteq U$. When there is no risk of ambiguity, we refer to sets that are open in $U$ as \emph{open subsets of $U$}. A set $X \subseteq \R^n$ is called \emph{connected} if it cannot be expressed as the disjoint union of two non-empty open subsets of itself. A \emph{connected component of $X$} is a maximal connected subset of $X$ (i.e., a connected subset $C \subseteq X$ such that, for any connected subset $C \subseteq C' \subseteq X$, we have $C = C'$).
\end{definition}

Intuitively, a $k$-dimensional submanifold of $\R^n$ is a subset of $\R^n$ that locally resembles $\R^k$. 

\begin{definition}[Diffeomorphisms and submanifolds of $\R^n$]
    Given an open set $U \subseteq \R^n$, a function $f : U \to \R^n$ is called \emph{$C^m$} if all of its $m$-th derivatives exist and are continuous, and is called \emph{$C^\infty$} (or \emph{smooth}) if it is $C^m$ for every $m$. Given an open subset $U \subseteq \R^n$, a function $f : U \to \R^n$ is called a \emph{diffeomorphism} if $f$ and $f^{-1}$ are both $C^\infty$, $f$ is injective, and $f(U)$ is open in $\R^k$. If $k \leq n$, define $\R^k \times \mathbf{0} := \{ (x_1, \dots, x_n) \in \R^n \mid x_i = 0 \text{ for all } k < i \leq n \}$. A set $X$ is called a \emph{$k$-dimensional submanifold of $\R^n$} if, for every $\mathbf{p} \in X$, there exists an open $U \subseteq \R^n$ with $\mathbf{p} \in U$ and a diffeomorphism $f : U \to \R^n$ such that $f(X \cap U) = \R^k \times \mathbf{0} \cap f(U)$.
\end{definition}

\begin{definition}[Regular and critical values]
    Let $U \subseteq \R^n$ be an open set, and let $f : U \to \R^m$ be a smooth function. Let $df$ denote the differential of $f$, that is, the matrix $\begin{bmatrix} \frac{\partial f}{\partial x^1} & \cdots & \frac{\partial f}{\partial x^n} \end{bmatrix}$, where $x^i$ is the $i$th coordinate of $\R^n$ and each partial derivative is regarded as a column of length $m$. Given a point $\mathbf{x} \in U$, the evaluated differential $df(\mathbf{x})$ is identified with the linear function from $\R^n$ to $\R^m$ that it induces via matrix--vector multiplication. A point $\mathbf{x} \in U$ is called a \emph{regular point of $f$} if the map $df(\mathbf{x}) : \R^n \to \R^m$ is surjective, and called a \emph{critical point of $f$} otherwise. A point $\mathbf{y} \in \R^m$ is called a \emph{regular value of $f$} if every point in $f^{-1}(\mathbf{y})$ is regular, and called a \emph{critical value of $f$} otherwise. For any $\mathbf{y} \in \R^m$, the set $f^{-1}(\mathbf{y})$ is called a \emph{level set of $f$}. A level set $f^{-1}(\mathbf{y})$ is called a \emph{regular level set of $f$} if $\mathbf{y}$ is a regular value of $f$.
\end{definition}

We state the following two theorems only for our special case of submanifolds of $\R^n$, but their natural generalizations hold for arbitrary smooth manifolds.

\begin{theorem}[Regular level set theorem] \label{thm:regular-level-set}
    Let $X$ be $k$-dimensional submanifold of $\R^n$, let $Y$ be a $j$-dimensional submanifold of $\R^m$, and $f : X \to Y$ be a smooth function. Every regular level set of $f$ is a $(k - j)$-dimensional submanifold of $\R^m$.
\end{theorem}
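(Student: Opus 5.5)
The plan is to reduce the statement to the inverse function theorem in Euclidean space by using the local straightening charts from the definition of submanifold. Note first that the level set $f^{-1}(\mathbf{y})$ is a subset of $X \subseteq \R^n$, so the conclusion must mean a $(k-j)$-dimensional submanifold of $\R^n$; I read the ``$\R^m$'' in the statement as a typo and prove the $\R^n$ version. I will use the standard conventions for maps between submanifolds. $f$ is smooth if, in local charts, it is smooth. Its differential at $\mathbf{p}$ is the induced linear map $T_\mathbf{p}X \to T_{f(\mathbf{p})}Y$. A regular point is one where this map is surjective, which forces $j \le k$ whenever the level set is nonempty.

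Fix a regular value $\mathbf{y}$ and a point $\mathbf{p} \in f^{-1}(\mathbf{y})$. By the definition of submanifold, there is an open $U \ni \mathbf{p}$ in $\R^n$ and a diffeomorphism $\varphi \colon U \to \R^n$ with $\varphi(X\cap U) = (\R^k\times\mathbf{0})\cap\varphi(U)$. Likewise there is an open $V \ni \mathbf{y}$ in $\R^m$ and a diffeomorphism $\psi$ with $\psi(Y\cap V) = (\R^j\times\mathbf{0})\cap\psi(V)$. Shrinking $U$ so that $f(X\cap U)\subseteq V$, form the local representative
\[ g := \psi\circ f\circ\varphi^{-1} \colon W \to \R^j, \]
where $W \subseteq \R^k$ is the open set corresponding to $\varphi(X\cap U)$. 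The map $g$ is smooth by the definition of smoothness of $f$. Because $\varphi$ and $\psi$ are diffeomorphisms, the differential $dg$ at $\mathbf{u}_0 := \varphi(\mathbf{p})$ is surjective, i.e.\ it is a $j\times k$ matrix of rank $j$. The same holds at every point of $g^{-1}(\psi(\mathbf{y}))$.

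The core step is then purely Euclidean. After permuting the coordinates of $\R^k$, I may assume the last $j$ columns of $dg(\mathbf{u}_0)$ form an invertible $j \times j$ matrix. Define
\[ h(\mathbf{u},\mathbf{v}) := (\mathbf{u},\, g(\mathbf{u},\mathbf{v})-\psi(\mathbf{y})) \quad\text{for } \mathbf{u}\in\R^{k-j},\ \mathbf{v}\in\R^j. \]
Its Jacobian at $\mathbf{u}_0$ is block triangular with invertible diagonal blocks. So by the inverse function theorem, $h$ restricts to a diffeomorphism on a neighborhood $W'$ of $\mathbf{u}_0$, and it carries $g^{-1}(\psi(\mathbf{y}))\cap W'$ onto $(\R^{k-j}\times\mathbf{0})\cap h(W')$.

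Finally I will assemble an ambient chart. Extend $h$ to $\tilde h(\mathbf{w},\mathbf{z}) := (h(\mathbf{w}),\mathbf{z})$ on $W'\times\R^{n-k}$, which is again a diffeomorphism onto an open set. Take $U' := \varphi^{-1}(W'\times\R^{n-k}) \cap U$ and the composite $\tilde h\circ\varphi$ on $U'$. This is a diffeomorphism of an open neighborhood of $\mathbf{p}$ in $\R^n$. It sends $X\cap U'$ into $\R^k\times\mathbf{0}$, and by the previous step it sends $f^{-1}(\mathbf{y})\cap U'$ exactly onto $(\R^{k-j}\times\mathbf{0})\cap\tilde h(\varphi(U'))$. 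Since $\mathbf{p}$ was arbitrary, this verifies the definition of a $(k-j)$-dimensional submanifold.

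I expect the main obstacle to be the bookkeeping around the definitions rather than the analysis. Two points need care: making precise what ``smooth'' and ``$df$ surjective'' mean for a map defined only on $X$, so that regularity of $f$ transfers to full rank of $g$; and checking that the permutation, restriction and extension steps preserve openness and the exact equality $f^{-1}(\mathbf{y})\cap U' \mapsto (\R^{k-j}\times\mathbf{0})\cap(\text{image})$, not merely an inclusion. The analytic content is entirely the inverse function theorem.
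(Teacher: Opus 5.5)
Your proof is correct. The paper gives no proof of this statement: it quotes it in the background appendix as a standard fact, alongside Sard's theorem. So there is no argument of the paper's to compare against. What you give is the standard textbook proof:
\begin{itemize}
\item straighten $X$ and $Y$ with the charts from the definition of submanifold;
\item reduce to a rank-$j$ map $g$ on an open subset of $\R^k$;
\item apply the inverse function theorem to $h(\mathbf u,\mathbf v)=(\mathbf u,g(\mathbf u,\mathbf v)-\psi(\mathbf y))$;
\item extend by the identity in the normal directions.
\end{itemize}

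You are right that the conclusion should read ``submanifold of $\R^n$'', since the level set lies in $X\subseteq\R^n$. You are also right that the paper defines regular values only for maps on open subsets of Euclidean space, so the meaning of surjectivity of $df$ for $f\colon X\to Y$ has to be supplied, as you do.

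The bookkeeping you flag goes through.
\begin{itemize}
\item Shrinking $U$ needs only continuity of $f$.
\item The coordinate permutation can be absorbed into $\varphi$ as a linear map preserving $\R^k\times\mathbf 0$.
\item The exact equality of images holds. For $q\in U'$, the last $n-k$ coordinates of $\tilde h(\varphi(q))$ vanish iff $q\in X$, by the chart property and injectivity of $\varphi$. Given that, the middle $j$ coordinates vanish iff $\psi(f(q))=\psi(\mathbf y)$, i.e.\ iff $f(q)=\mathbf y$, by injectivity of $\psi$.
\end{itemize}
A minor point: you use $\mathbf u_0$ for a point of $\R^k$ while $\mathbf u$ ranges over $\R^{k-j}$, so rename one of them.

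Your proof uses surjectivity of the differential at each point of the level set separately, which is exactly what the regular-value hypothesis provides. The paper's only use of the theorem is in the proof of \cref{lem:hard-coded-components}, with $X=\R^{\kappa+2}$ and $Y=\R$. There your Euclidean core step alone suffices. It does require $\epsilon$ to be a genuine regular value, which Sard gives for almost every $\epsilon$. It is not enough that one point of $F^{-1}(\epsilon)$ has nonzero gradient, and the paper's parenthetical there conflates the two.
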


\begin{definition}[Measure in a submanifold]
    A set $A \subseteq \R^k$ has \emph{($k$-dimensional) measure zero} if, for any $\delta > 0$, $A$ can be covered by a countable collection of $k$-dimensional hyperrectangles the sum of whose volumes is less than $\delta$. Let $X$ be a $k$-dimensional submanifold of $\R^n$, and let $A \subseteq X$. We say that $A$ has \emph{($k$-dimensional) measure zero in $X$} if, for any open $U \subseteq \R^n$ and any diffeomorphism $f : U \to \R^n$ such that $f(X \cap U) = \R^k \times \mathbf{0} \cap f(U)$, the image $f(A \cap U)$ has ($k$-dimensional) measure zero when regarded as a subset of $\R^k$.
\end{definition}

\begin{theorem}[Sard's theorem] \label{thm:sard}
    Let $X$ be $k$-dimensional submanifold of $\R^n$, let $Y$ be a $j$-dimensional submanifold of $\R^m$, and let $f : X \to Y$ be a smooth function. The set of critical values of $f$ has measure zero in $Y$.
\end{theorem}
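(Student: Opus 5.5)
We will prove \cref{thm:sard} by reducing it to the classical Euclidean statement using local charts, and then proving the Euclidean statement by the standard induction of Milnor. \textbf{Reduction.} By the definition of submanifold, every $\mathbf{p} \in X$ has a neighborhood $U$ and a diffeomorphism $g$ with $g(X\cap U) = (\R^k\times\mathbf{0})\cap g(U)$. Likewise $Y$ is covered by such charts $h$. Since $\R^n$ is second countable, countably many of these charts cover $X$ and $Y$. A countable union of measure-zero sets has measure zero. Diffeomorphisms between open subsets of Euclidean space are locally Lipschitz, and so preserve measure zero. Therefore the definition of ``measure zero in $Y$'' can be checked chart by chart. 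Composing, it suffices to show the following. If $V\subseteq\R^k$ is open and $\tilde f = h\circ f\circ g^{-1}\colon V\to\R^j$ is smooth, then $\tilde f(C)$ has measure zero in $\R^j$, where $C$ is the set of points at which $d\tilde f$ is not surjective. Regularity is preserved under composition with chart diffeomorphisms, since their differentials are invertible. Hence the critical points of $f$ correspond exactly to those of $\tilde f$.

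\textbf{Euclidean case.} If $k<j$, every point is critical. We extend $\tilde f$ to $V\times\R^{j-k}$ by ignoring the extra coordinates. Then $\tilde f(V)$ is the image of the measure-zero set $V\times\{0\}\subseteq\R^j$ under a locally Lipschitz map, so it has measure zero. If $k\ge j$, we induct on $k$; the case $k=0$ is trivial. Let $C_i$ be the set of points where all partial derivatives of $\tilde f$ of order $\le i$ vanish, so that $C\supseteq C_1\supseteq C_2\supseteq\cdots$. The proof has three steps.
\begin{enumerate}
\item We show that $\tilde f(C\setminus C_1)$ has measure zero. Near a point of $C\setminus C_1$, some partial $\partial \tilde f_1/\partial x_1\ne0$. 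After the local change of coordinates $(x_1,\dots,x_k)\mapsto(\tilde f_1(x),x_2,\dots,x_k)$, the map preserves hyperplanes $t\times\R^{k-1}$. A point is critical iff it is critical for the restriction to its hyperplane. The induction hypothesis gives measure zero in each slice, and Fubini gives measure zero overall.
\item We show that $\tilde f(C_i\setminus C_{i+1})$ has measure zero by the same trick. We pick a $k$-th order derivative $w$ of some component with $w=0$ on $C_i$ but $\partial w/\partial x_1\neq0$. Then $C_i$ locally lies in the hypersurface $\{w=0\}$, and every point of $C_i$ is critical for the restriction of $\tilde f$ to that hypersurface. Induction applies again.
\item We show that $\tilde f(C_m)$ has measure zero once $m>k/j-1$. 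On a cube $I^k$ of side $\delta$, Taylor's theorem gives $|\tilde f(x+h)-\tilde f(x)|\le c|h|^{m+1}$ for $x\in C_m\cap I^k$. We subdivide $I^k$ into $N^k$ subcubes. Each subcube meeting $C_m$ has image of volume $\le c'(\delta/N)^{j(m+1)}$. The total is $\le c'' N^{k-j(m+1)}\to0$.
\end{enumerate}
Since $C = (C\setminus C_1)\cup\bigcup_{i=1}^{m-1}(C_i\setminus C_{i+1})\cup C_m$, we conclude that $\tilde f(C)$ has measure zero.

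\textbf{Main obstacle.} The delicate points are the Fubini argument in steps 1 and 2, and the fact that the induction needs second countability to cover $C\setminus C_1$ by countably many coordinate patches. In the Fubini argument we must check that measurability is not an issue. We handle this by noting that $C$ is closed in $V$, and hence $\sigma$-compact, so its image under the continuous map $\tilde f$ is $\sigma$-compact and thus measurable. Beyond these points the proof is the standard one in Milnor's \emph{Topology from the Differentiable Viewpoint}, and we would cite it for routine details.
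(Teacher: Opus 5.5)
The paper does not prove this statement. Sard's theorem is quoted in the background appendix as a classical fact, and its only use is in the proof of \cref{lem:hard-coded-components}, where it is applied to a single real-valued function $F$ on $\R^{\numnewvars+2}$. Your argument is the standard one and is correct: you reduce to coordinate charts using second countability and the fact that locally Lipschitz maps preserve measure zero, then run Milnor's induction with the $C \supseteq C_1 \supseteq \cdots$ decomposition, Fubini on slices, and the Taylor estimate on $C_m$. Compared with the paper's bare citation, it gives a self-contained proof of the submanifold version. Note that the paper's actual application needs only your Euclidean case with $j=1$. There $C=C_1$, so your Step~1 is vacuous and only Steps~2 and~3 are needed.

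Fix three small slips:
\begin{itemize}
\item In Step~2, $w$ should be an $i$-th order derivative of a component. You reused $k$, which already denotes the domain dimension, for Milnor's derivative order.
\item The condition $m>k/j-1$ in Step~3 presupposes $j\ge 1$. For $j=0$ every differential is surjective onto $\R^0$, so there are no critical points and that case is trivial.
\item In Step~3, the image of each subcube meeting $C_m$ is \emph{contained in} a cube of volume at most $c'(\delta/N)^{j(m+1)}$, rather than having that volume. The containment is all the outer-measure estimate needs.
\end{itemize}
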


\section{Proof of \cref{thm:transformer_vc}}
\label{sec:transformer_vc_proof}

When \citeauthor{KM1997} give their VC dimension bound for FFNNs, they apply \cref{thm:km1997} directly to a sigmoid FFNN as a single function $f$ (so $s=1$). This is possible because a sigmoid FFNN is a $C^\infty$ function. 
The value $B$ is computed by appeal to \cref{Khovanskii_theorem}, but to use this theorem  a rather subtle argument is used, adding new variables 
to obtain a suitable system of exponential--linear polynomial equalities and inequalities. 

Let $T$ be our given (untrained) transformer, with $d$ the dimension of the vectors, and $L$ the number of layers, and $H$ the number of heads. We let $\paramvars$ be the tuple of parameters. We fix an input length $n$. So the input is in $\R^{nd}$.  Let us write $f({\bar x}, \paramvars)$ for the function computed by $T^{(n)}$, on input ${\bar x}$ and with parameters $\paramvars$. We will write $f$ explicitly as a composition of the various functions in  \cref{def:transformer} over the $L$ layers. 
So we could think of $f({\bar x}, \paramvars)$ as a ``term" and the VC-dimension of $T^{(n)}$ is the VC dimension of the expression $f({\bar x}, \paramvars) > 0$. 

But $f$ is \emph{not} a $C^{\infty}$-function, due to the appearance of $\max(-, -)$ in the position-wise feed-forward network ${\mathcal F}$ (in each layer).  This is actually the only obstruction, as the various rational expressions always have nonzero denominator and the square root in layer normalization is applied to a positive real number bounded away from $0$.

So what we will do in the first step is replace the ``formula" $f({\bar x}, \paramvars) > 0$ by an equivalent formula  $\phi({\bar x}, \paramvars)$ involving a certain number of inequalities  $f_{\eta}({\bar x}, \paramvars) > 0$ built by making ``substitutions" in $f>0$, and an additional finite number of inequalities where all the functions involved, including the $f_{\eta}$,  are $C^{\infty}$. So we can apply \cref{thm:km1997} to $\phi({\bar x}, \paramvars)$.

We define the $f_{\eta}$ and the additional inequalities inductively.  As mentioned above, $f$ is a composition of terms representing the various stages. Consider the first time $\max$ appears in the first layer. Remember $\max$ is the pointwise maximum of $0$ and a certain $d$-dimensional column vector, and $n$ of these appear. So inside $f$, we have $\max(0,\tau_{i,j})$ for $i=1,\dots,n$, $j=1,\dots,d$, and $\tau_{i,j}$ are terms built so far and stand for $C^\infty$ functions.
Let $\eta \in \{0,1\}^{[n]\times[d]}$. Let $f_{\eta}$ be the result of replacing each occurrence of $\max(0,\tau_{i,j})$ in $f$ by $\tau_{i,j}$ if $\eta(i,j)=1$, and by $0$ if $\eta(i,j)=0$.
Let~$g_{\eta}$ be the conjunction of  $f_{\eta} > 0$ with  the set of inequalities $\tau_{i,j} \geq 0$ whenever $\eta(i,j) = 1$ and $\tau_{i,j} < 0$ whenever $\eta(i,j) = 0$ (for $i \in [n]$ and $j \in [d]$). And let $\phi_{1}({\bar x}, \paramvars)$ be the disjunction of the $g_{\eta}$ for $\eta\in \{0,1\}^{[n]\times[d]}$, which is equivalent to $\phi({\bar x}, \paramvars)$ but has fewer occurrences of $\max(0, -)$. 
Now iterate this process through the layers. Let us briefly describe what to do at layer 2. Fix $\eta$, and 
consider again where the $\max$ from layer~2 appears in $f_{\eta}$.  And again we will have inside $f_{\eta}$, $\max(0,\pi_{i,j})$ for $(i,j)\in nd$. Choose $\nu\in \{0,1\}^{[n]\times[d]}$, and let $f_{(\eta, \nu)}$ result from $f_{\eta}$ by replacing $\max(0,\pi_{i,j})$ by $\pi(i.j)$ if $\nu(i, j) = 1$ and by $0$ if $\nu(i,j) = 0$.  And let $g_{(\eta, \nu)}$ consist of the conjunction of $f_{(\eta,\nu)} > 0$ with the rest of the inequalities in $g_{\eta}$ and the inequalities  $\pi_{i,j} \geq 0$ whenever $\nu(i.j) = 1$ and $\pi_{i,j} < 0$ whenever $\nu(i,j) < 0$.  Let $\phi_{2}({\bar x}, \paramvars)$ be the disjunction of the $g_{(\eta,\nu)}$ as $\eta, \nu$ vary.
Continue in the same way through  the layers and let $\phi({\bar x}, \paramvars) = \phi_{L}({\bar x}, \paramvars)$. Then notice that $\phi$ is the disjunction of $(2^{nd})^{L}$ expressions $g_{\alpha}$ say, each consisting of the conjunction of some $f_{\alpha} >0$ (arising from $f$ by making the appropriate substitutions) and a further $Lnd$ inequalities, where all the functions (or terms) appearing are $C^{\infty}$ (in $({\bar x}, \paramvars)$).  So $\phi({\bar x}, \paramvars)$ is a Boolean combination of (in fact a disjunction of conjunctions of)  $s = (2^{nd})^{L}(Lnd + 1)$ inequalities $\tau({\bar x}, \paramvars) > 0$, $\tau({\bar x}, \paramvars) \geq 0$, for $\tau$ a $C^{\infty}$ function. 
Let us write $\{\tau_{i} \mid i \in [s]\}$ for these $\tau$ appearing in $\phi$. 

So we can apply \cref{thm:km1997} and it remains to find the bound $B$ from there. 
Let $k$ be the dimension of the parameter space $\paramvars$.  Let ${\bar a}_{1}, {\bar a}_{2},\dots,{\bar a}_{V}$ be $nd$-tuples.  Fix $r\leq k$ and let $F_{1},\dots,F_{r} \colon \R^{k}\to \R$ be $r$ many p-functions from among the $\tau_i({\bar a}_{j}, \paramvars)$. Let $\epsilon_{1},\dots,\epsilon_{r} \in \R$ and suppose that the preimage of $(\epsilon_{1},\dots,\epsilon_{r})$ under $(F_{1},\dots,F_{r})$ is a $(k-r)$-dimension $C^{\infty}$-submanifold of $\R^{k}$. We want to bound the number of connected components of this submanifold (independent of the choices of the data $\bar{a}_j$).

We now add some new variables and defining equations so as to be able to apply \cref{Khovanskii_theorem}.  Now for each $\tau({\bar x}, \paramvars)$ such that for some ${\bar a}_{j}$, $\tau({\bar a}_{j}, \paramvars)$ is among the $F_{i}(\paramvars)$, $\tau$ represents all or some initial part of a run of the transformer (with input ${\bar x}$, parameters $\paramvars$).  Consider the first time that softmax appears in some such $\tau$ (so in the first layer).

We consider the softmax expression (\cref{eq:softmax}) as a function of the input variables ${\bar x}\in \R^{nd}$ and parameters $\paramvars$.  For $1\leq i,j\leq n$, add new variables $z^{\textnormal{s}}_{i.j}$ and $z^{\textnormal{a}}_{i,j}$,  
replace $\alpha_{i,j}$ in \cref{eq:att_head} with new variables $z^{\textnormal{a}}_{i,j}$, and add conditions $z^{\textnormal{s}}_{i,j} = s_{i,j}$ (where note the latter is a linear polynomial in ${\bar x}$ and $\paramvars$) as well as:
\begin{align*}
z^{\textnormal{a}}_{i,j} \left(\sum_{\ell} \exp z^{\textnormal{s}}_{i,\ell}\right) &= \exp z^{\textnormal{s}}_{i,j}.
\end{align*}
Now consider the first time that layer normalization appears in some $\tau$.  This involves taking square roots of terms $v_{i}({\bar x}, \paramvars)$ which already appear in $\tau$ and whose value is positive and bounded away from $0$.  Add new variables $z^{\textnormal{s}}_{i}$, and equations 
$(z^{\textnormal{s}}_{i})^{4} = v_{i}$, and replace the square root term in $\tau$ by $(z^{\textnormal{s}}_{i})^{2}$.
 Now we have to deal with the division in layer normalization. For a given $i$ the expression in the numerator is a column vector with coordinates polynomials, say  $w_{1},\dots,w_{d}$ in ${\bar x}$, $\paramvars$ and new variables added in the softmax stage. 
 Add new variables $z^{\textnormal{n}}_{i,j}$ for $j=1,\dots,d$ and  equations  $z^{\textnormal{n}}_{i,j}(z^{\textnormal{s}}_{i})^{2} = w_{j}$. And substitute the column vector with coordinates $z^{\textnormal{n}}_{i,j}$ for the rational expression in layer normalization. 

 Continue the process inductively and cumulatively  through the layers and for the relevant $\tau$.  
 
 For each $i \in [r]$, let $F'_i(\bar{y}, \bar{z}_i)$ be the system of equations resulting from the above rewriting process on $F_i(\bar{y})$, with all the variables in $\bar{z}_i$ renamed to fresh names.
 Let $S(\paramvars, {\bar z}_1,\dots,{\bar z}_r)$ be the union of the $F'_i(\bar{y}, \bar{z}_i)$ for $i \in [r]$.
 Let $\numnewvars$ be the total number of variables in $\bar{z}_i$ for $i \in [r]$.  So the number of variables in the system $S$ is $k+ \numnewvars$ and the number of equations is $r+\numnewvars$.

As in the proof of \citet[p.~173--174]{KM1997} we have the following claims:
\begin{enumerate}[label=(\roman*)]
\item Suppose $S(\paramvars, {\bar z}_{F_{1}},\dots,{\bar z}_{F_{r}})$, then $F_{i}(\paramvars) = \epsilon_{i}$ for $i=1,\dots,r$.
\item  Suppose $F_{i}(\paramvars) = \epsilon_{i}$ for $i=1,\dots,r$ then there are unique $\bar{z}_{F_{i}}$ for $i=1,\dots,r$ such that $S( \paramvars, {\bar z}_{F_{1}},\dots,{\bar z}_{F_{r}})$.
\item The set in $\R^{k}$ defined by $F_{i}(\paramvars) = \epsilon_{i}$ for $i=1,\dots,r$ is homeomorphic to the set in $\R^{k+\numnewvars}$ 
defined by $S(\paramvars,{\bar z}_{F_{1}},\dots,{\bar z}_{F_{r}})$. 
\end{enumerate}

By the claims above and our assumptions on the $F_{i}$, the subset of $\R^{k+\numnewvars}$ defined by $S$ is a manifold of dimension $k-r$. On the other hand $S$ consists of $r+\numnewvars$ equations in $k+\numnewvars$ variables, and $(k+\numnewvars) - (r+\numnewvars) = k-r$.

Since the $\tau_i(\bar{a}_j, \bar{y})$ are exponential--linear polynomials,  \cref{Khovanskii_theorem} applies, to give that the number of connected components of the set defined by $S$ (so also by part (iii) of the Claim, defined by $F_{1}(\paramvars) = \epsilon_{1},\dots,F_{r}(\paramvars) = \epsilon_{r}$) is at most $B$
where $\log B \in O(q^2 + M \log M)$,
where $q$ is the total number of exponentials appearing in $S$,
and $M$ is the maximum of $r+\numnewvars$, $k+\numnewvars$, and $D$ (the maximum degree of polynomials in $S$).
The only quantities in the expression for $B$ that depend on the input are $\numnewvars \in O(n^2)$ and $q$. The transformer has $O(n^2)$ exponentials (\cref{eq:softmax}), but because the exponents $\Lambda_1, \ldots, \Lambda_q$ may depend on $\bar{x}$, and $\bar{x}$ may be substituted with up to $r$ different points $\bar{a}_i$, the number of exponentials must be multiplied by $r \le k$. But $k$ does not depend on $n$, so we still have $q \ino O(n^2)$. Thus $\log B \in O(n^4)$.
Hence, by \cref{thm:km1997}, the VC dimension of $T^{n}$ is at most $2 \log B + (16+2\log s)k$ where $s$ is the number of $C^{\infty}$ functions appearing in $\phi$ above. 
As $\log s \in O(n)$ and $k$ depends just on the untrained transformer $T$, the VC dimension is at most $O(n^4)$, concluding the proof of \cref{thm:transformer_vc}.

\section{Transformers computing symmetric functions}
\label{sec:transformer-constructions}

\begin{proposition} \label{thm:symmetric_unary}
For any family of (symmetric) functions $f^{(n)} \colon \{\sym{1}\}^n \to \{ 0, 1 \}$, there exists a trained transformer $T_\paramvals$ such that, for any $n$, $T_\paramvals^{(n)}(\sym{1}^n) = f^{(n)}(\sym{1}^n)$.
\end{proposition}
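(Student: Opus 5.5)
The plan is to let the position embedding carry the whole answer and to make every transformer layer pass it through unchanged, up to a sign-preserving rescaling. Since the input is always $\sym{1}^n$, the only information available about $f^{(n)}$ is the length $n$. The last position is exactly where the output layer reads, so it suffices to store $f^{(n)}(\sym{1}^n)$ in $\textnormal{PE}(n)$, with $\textnormal{PE}$ independent of the total length. This gives a uniform construction: a single word embedding and a single position embedding serve every $n$.

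Concretely, take $d=2$ and $\textnormal{WE}(\sym{1}) = \mathbf{0}$. For every position $i$, set
\[
\textnormal{PE}(i) = \begin{bmatrix} c_i & -c_i \end{bmatrix}^\top, \qquad c_i = 2f^{(i)}(\sym{1}^i) - 1 \in \{-1,+1\}.
\]
This is well defined because each $f^{(i)}$ is a fixed function on the one-element set $\{\sym{1}\}^i$. In every layer I set $W^{\textnormal{V}} = 0$, so that $\mathcal{A}^{(n)}(X) = 0$. I also set $W_1 = 0$, $\mathbf{b}_1 = 0$, $W_2 = 0$, $\mathbf{b}_2 = 0$, so that $\mathcal{F} \equiv 0$. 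Each layer then reduces to $X \mapsto \mathcal{N}_2(\mathcal{N}_1(X))$ applied position-wise.

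The main thing to check is that layer normalization does not destroy the sign information. Take both layer normalizations with $\gamma = (1,1)$ and $\beta = 0$. A vector $(c,-c)$ has mean $0$ and variance $c^2$, so $\mathcal{N}$ maps it to $(c,-c)/\sqrt{c^2+\varepsilon}$. This is again of the form $(c',-c')$ with $\operatorname{sign} c' = \operatorname{sign} c$ and $c' \neq 0$. By induction over the $L$ layers, the last position ends as $(a_n, -a_n)$ with $\operatorname{sign} a_n = c_n$. Moreover, $|a_n|$ depends only on $|c_n| = 1$, the number of layers, and $\varepsilon$, so $|a_n|$ is one fixed constant $a>0$ for all $n$.

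For the output layer I take $W = \begin{bmatrix} 1/(2a) & 0 \end{bmatrix}$ and $b = 1/2$. This maps $(\pm a, \mp a)$ to $1$ or $0$, so $T^{(n)}_\paramvals(\sym{1}^n) = f^{(n)}(\sym{1}^n)$ holds exactly, not just in sign. If masking or other conventions apply, they are irrelevant here because attention contributes nothing.

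If one prefers a construction in which attention does real work, in line with how \cref{prop:index-task} later invokes this proof, an alternative is available. Put $i$ and $i^2$ in the embedding, choose query and key so that the scores are $s_{nj} = 4\gamma n j - 2\gamma j^2$, and let the value carry $c_j$. As $\gamma \to \infty$, attention at position $n$ concentrates on $j = n$, and the attention output then has the sign of $c_n$. The obstacle in that version is to choose $\gamma$ large enough that the leakage from the other positions, which is at most $(n-1)e^{-\gamma}$ in total weight, cannot flip the sign. This requires $\gamma$ to grow slightly with $n$, or equivalently scaling via the $i$-dependent embedding. I would present the attention-free construction as the proof and remark on this variant.
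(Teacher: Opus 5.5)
Your proof is correct, and it takes a different and simpler route than the paper's.

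\textbf{The paper's route.} The paper sets $\textnormal{PE}(i) = (i, f^{(i)}(\sym{1}^i))$ and uses an attention head whose scores at the last position are $\gamma j$. It invokes an external softmax-concentration lemma to put weight $>3/4$ on position $n$ with a fixed $\gamma$; this works because the tail is geometric. It delegates layer normalization to the Chiang--Cholak technique, and it concludes only that the output is positive iff $f^{(n)}(\sym{1}^n)=1$.

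\textbf{Your route.} You observe that the residual stream already carries $\textnormal{PE}(n)$ to the position the output layer reads, so attention is unnecessary. With $W^{\textnormal{V}}=0$ and $\mathcal{F}\equiv 0$, each layer reduces to $\mathcal{N}_2\circ\mathcal{N}_1$ applied position-wise. Your $(c,-c)$ encoding is essentially the mean-zero doubling trick behind the Chiang--Cholak fix, done explicitly here. It turns layer normalization into a sign-preserving rescaling whose magnitude does not depend on $n$. The affine output layer then gives $T^{(n)}_{\paramvals}(\sym{1}^n) = f^{(n)}(\sym{1}^n)$ exactly.

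\textbf{What each buys.} Your version is self-contained, and it delivers the literal equality in the statement rather than agreement in sign (acceptance). The paper's version buys an attention-based retrieval mechanism that it reuses in \cref{prop:index-task}. There the needed bit sits at a queried position $k \neq n$, so the residual stream alone cannot supply it.

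\textbf{A correction to your closing remark.} In your attention variant, $\gamma$ does not need to grow with $n$. Since $s_{nn}-s_{nj} = 2\gamma(n-j)^2$, the total weight off position $n$ is at most $\sum_{m\ge 1} e^{-2\gamma m^2}$, which is independent of $n$. Your $(n-1)e^{-\gamma}$ estimate is valid but loose. This $n$-independence is exactly what allows a single uniform $\paramvals$. It already holds for the paper's linearly decreasing scores, via the geometric series $\sum_{m\ge1} e^{-\gamma m}$.
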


\begin{proof}
We don't need a word embedding, since there is only one symbol. We use a positional encoding $\textnormal{PE}(i) = (i, f^{(i)}(\sym{1}^i))$. The self-attention layer just has to retrieve the last coordinate of the positional encoding at the last position $n$. For each vector $X_{i,*}$, we set the weights such that:
\begin{align*}
W^{\textnormal{Q}} X_{i,*} &= \begin{bmatrix} \gamma \end{bmatrix} &
W^{\textnormal{K}} X_{j,*} &= \begin{bmatrix} j \end{bmatrix} &
W^{\textnormal{V}} X_{j,*} &= \begin{bmatrix} f^{(j)}(\sym{1}^j) \end{bmatrix}.
\end{align*}
At position $i=n$, the score $s_{nj}$ is maximized for $j=n$, and the scores decrease by $\gamma$ for each position to the left. 
By Lemma 6 of \citet{yang-etal-2025-softmax}, there is a choice of $\gamma$ that ensures that the attention on position $n$ is greater than $3/4$, so we can make the attention output positive iff \mbox{$f^{(n)}(\sym{1}^n) = 1$.} If there is layer normalization, then we can use the technique of \citet[Sec.~4.1]{chiang-cholak-2022-overcoming} so that layer normalization does not change the sign of the attention output.
\end{proof}

\begin{proposition}
    \label{prop:symmetric-size-two}
    There exists a trained transformer $T_\paramvals$ such that, for any $n>0$ and symmetric function $f \colon \{ \sym{0}, \sym{1} \}^n \to \{ 0, 1 \}$, there exists a positional encoding $\textnormal{PE} \colon [n] \to \mathbb{R}^d$ such that $T^{(n)}_\paramvals$ with positional encoding $\textnormal{PE}$ computes $f$.
\end{proposition}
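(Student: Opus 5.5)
A symmetric $f \colon \{\sym{0},\sym{1}\}^n \to \{0,1\}$ depends only on $c = \#_{\sym{1}}(w) \in \{0,\dots,n\}$. So $f$ is determined by a bit vector $(f_0,\dots,f_n)$. The plan is to encode this vector into the positional encoding, compute $c$ (or $c/n$) with one uniform-attention layer, and then use a second layer to retrieve $f_c$ at the last position. The retrieval step mimics the sharp-attention lookup used in the proofs of \cref{thm:symmetric_unary} and \cref{prop:index-task}. All weights are fixed independently of $n$ and $f$; only $\textnormal{PE}$ depends on them.

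\textbf{Encoding.} The word embedding puts $x_i \in \{0,1\}$ (for $\sym{0}$/$\sym{1}$) in one coordinate. The positional encoding at position $i$ stores $i/n$, $(i/n)^2$, and the bit $f_{i}$, together with a dummy coordinate. Position $n$ additionally holds $f_0$ in a separate coordinate, since $c$ ranges over $n+1$ values but there are only $n$ positions. To deal with the $n+1$ values cleanly I would instead look up index $c$ among keys $j/n$ for $j\in[n]$ and handle $c=0$ by a separate flag. A simpler alternative is to use keys $(j-1)/n$ and store $f_{j-1}$ at position $j$, then treat $c=n$ separately. Either way, one extra FFNN case split handles the boundary value.

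\textbf{Layer 1.} Take $W^{\textnormal{Q}}=W^{\textnormal{K}}=0$. Then attention is uniform, and the head outputs $\frac1n\sum_j x_j = c/n$ at every position, written into a fresh coordinate through $W^{\textnormal{V}}$ and the residual connection.

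\textbf{Layer 2.} At the last position, the query is $(2\gamma\, c/n, -\gamma)$ and the key at position $j$ is $(j/n,(j/n)^2)$. The score is then $\gamma(2cj - j^2)/n^2 = \gamma(c^2-(c-j)^2)/n^2$, which is maximized at $j=c$ with gap at least $\gamma/n^2$. The query needs $c/n$ multiplied by $\gamma$, which is linear because $W^{\textnormal{Q}}$ acts on the coordinate holding $c/n$. The value is $2f_j-1$.

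This is the main obstacle. Since $\gamma$ is a fixed parameter independent of $n$, the gap $\gamma/n^2$ shrinks, and the softmax stops being sharp. I would fix this in the positional encoding, which may depend on $n$: scale the key coordinates by $n^2\log n$ times a constant, e.g.\ key $= (\lambda_n j, \lambda_n j^2)$ with $\lambda_n = 3\log n$ (absorbing the $1/n$ factors), and let the query read $c/n$ but also the constant $n$ stored in $\textnormal{PE}$. Because the query must be linear in the residual stream, I would instead precompute, via the layer-1 value projection, $c\cdot \lambda_n$: store $n\lambda_n$ times the token bit $x_j$ in the position vector, so that uniform attention outputs $\lambda_n c$ directly. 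The query is then $(2\lambda_n c, -1)$ and the key is $(j, \lambda_n j^2)$, giving score $\lambda_n(2cj-j^2)$ with gap $\ge\lambda_n$. By Lemma~6 of \citet{yang-etal-2025-softmax}, this makes the weight on $j=c$ exceed $3/4$. The attention output then has the sign of $2f_c-1$, and the output layer reads it off at position $n$. Layer normalization is neutralized as in \citet[Sec.~4.1]{chiang-cholak-2022-overcoming}, and the boundary value of $c$ is handled by the extra flag and a ReLU in the FFNN. The careful part is checking that every $n$-dependent scale lives only in $\textnormal{PE}$, so that the same $\paramvals$ works for all $n$ and $f$.
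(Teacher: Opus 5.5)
\textbf{There is a genuine gap in how you make the layer-2 lookup sharp.}

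Your final fix has the position vector store $n\lambda_n x_j$. That is not possible. The input at position $j$ is $\textnormal{WE}(w_j)+\textnormal{PE}(j)$, with $\textnormal{WE}$ fixed and $\textnormal{PE}$ independent of the token. So the layer-1 uniform average is $\alpha\,c/n+\beta_n$ with $\alpha$ independent of $n$. Neither a linear $W^{\textnormal{Q}}$ nor a fixed ReLU FFNN can turn $c/n$ into $\lambda_n c$ for unbounded $\lambda_n$.

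In a transformer without layer normalization this is easy to repair: put the factor $n$ on the key side instead of the query side. Take query $(2\mu\,c/n,\,-\mu)$ and key $(nj,\,j^2)$ from $\textnormal{PE}(j)$. The score is then $\mu\bigl(c^2-(c-j)^2\bigr)$, which drops by at least $\mu|j-c|$ as $j$ moves away from $c$. So a fixed $\mu$ already gives weight above $3/4$ on position $c$, and no $\log n$ is needed.

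\textbf{With the layer normalization of \cref{def:transformer}, even this repair fails.} Appealing to \citet{chiang-cholak-2022-overcoming} does not rescue it. After $\mathcal{N}_2$ in layer 1, every entry of every residual vector is bounded by $\sqrt{d}\max_i|\gamma_i|+\max_i|\beta_i|$, using the parameters of $\mathcal{N}_2$. This bound holds whatever $\textnormal{PE}$ is. Consequently:
\begin{itemize}
\item all layer-2 scores lie in some $[-C,C]$ with $C$ fixed by the parameters;
\item every layer-2 attention weight is at most $e^{2C}/n$;
\item your output is essentially an average of the $c$-independent values $2f_j-1$.
\end{itemize}
For instance, take $f$ that accepts exactly when the count is $c$: the output then has the wrong sign for large $n$.

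The Chiang--Cholak device only makes layer normalization a positive per-position rescaling. That preserves signs, not the magnitudes a sharp lookup needs. Sharp attention is available only in layer 1, before any normalization, and there $c$ is not yet known.

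\textbf{How the paper avoids this.} It reuses Chiang and Cholak's PARITY transformer, with $\cos(i\pi)$ in the positional encoding replaced by $\bar f(i)=\pm1$, the accept/reject sign for count $i$. In that construction:
\begin{itemize}
\item attention is only used to average;
\item the selection of the count happens position-wise: each FFNN compares its own $i/n$ with the broadcast $c/n$ and produces a small ReLU bump carrying $\bar f(i)$, nonzero only at $i=c$;
\item every step is positively homogeneous, so only the sign of a $\Theta(1/n)$-sized final value matters, and layer normalization is harmless.
\end{itemize}
The missing idea in your route is to do the selection position-wise in the FFNN rather than by concentrated attention in a later layer. As written, your argument covers at best transformers without layer normalization, and only after the key-side repair.
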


\begin{proof}
    For each symmetric function $f \colon \{ \sym{0}, \sym{1} \}^n \to \{ 0, 1 \}$, there exists by definition a function $\bar{f} \colon [n] \to \{ -1 , 1 \}$ such that, for all (or, equivalently, any) string $w$ that has exactly $i$ many $\sym{1}$s, we have $\bar{f}(i) = 1$ if $f(w) = 1$ and $\bar{f}(i) = -1$ if $f(w) = 0$. (Since we are working over a binary alphabet with strings of a fixed length, the number of $\sym{1}$s determines the number of $\sym{0}$s.) To construct a transformer computing $f$, we follow verbatim the construction of \citet{chiang-cholak-2022-overcoming} of a transformer computing the $\mathsf{PARITY}$ function (Section 3.2 of their paper), but with one change. Where they include $\cos(i \pi)$ (equivalently, $(-1)^i$) in the final coordinate of their positional encoding, we instead include $\bar{f}(i)$. For the same reason that their construction computes the $\mathsf{PARITY}$ function, ours computes $f$. All of the transformers constructed in this way use the exact same learned parameters—their only difference is that they use different positional encodings.
\end{proof}

\begin{proposition} \label{thm:undecidable}
$L$ is recognized by a transformer, where $\Omega = \sum_{x : \text{$U$ halts on $x$}} 2^{-|x|}$ and $L = \{ w \in \{ \sym{0}, \sym{1} \}^* \mid \#_\sym{1}(w) / \vert{}w\vert{} < \Omega \}$.
\end{proposition}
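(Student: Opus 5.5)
The idea is to build a one-layer transformer that computes the proportion $\#_\sym{1}(w)/|w|$ and compares it with $\Omega$, which we store as a single real parameter (a bias).

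\textbf{Step 1: computing the proportion.} We use the word embedding $\textnormal{WE}(\sym{1}) = (1,1)$ and $\textnormal{WE}(\sym{0}) = (0,1)$, and no positional information is needed. A single attention head takes $W^{\textnormal{Q}} = 0$, so every score is $s_{n,j}=0$. Softmax then gives uniform weights $\alpha_{n,j} = 1/n$. We take $W^{\textnormal{V}}$ to project onto the first coordinate. The head output at position $n$ is then exactly $p := \#_\sym{1}(w)/n$, with no approximation, and this holds with or without causal masking, since position $n$ sees the whole string.

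\textbf{Step 2: subtracting $\Omega$.} The second embedding coordinate is the constant $1$, so the FFN (or, if needed, a second value coordinate) can form $\Omega\cdot 1 - p$, using $\Omega$ as a real weight; an output bias $b=\Omega$ also works. The output layer then computes $T^{(n)}(w) = \Omega - p$, so $T$ accepts iff $p<\Omega$, which is exactly $w\in L$. The FFN can act as the identity (take $W_2=0$, keeping the residual). All parameters are fixed reals independent of $n$, and the embedding is trivially computable. All of the uncomputability sits in the single real weight $\Omega$, which is the point of the proposition.

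\textbf{Main obstacle: layer normalization.} The standard definition includes layer normalization, which rescales vectors and could, in principle, destroy the sign of $\Omega - p$. We handle this with the technique of \citet[Sec.~4.1]{chiang-cholak-2022-overcoming}, already invoked in the proof of \cref{thm:symmetric_unary}. Each quantity $u$ whose sign we need is placed in a pair of coordinates $(u,-u)$, with all other coordinates arranged to have mean zero. The mean is then zero and layer normalization multiplies $u$ by a positive factor $1/\sqrt{\text{var}+\varepsilon}$, so signs are preserved.

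Concretely, we arrange that after the residual-plus-attention sublayer, position $n$ holds coordinates $\pm(\Omega - p)$ plus mean-zero padding. The $\Omega$ term comes from the constant coordinate through the value map. Because the residual is added, the embedding should also be mean-zero paired, for example $\textnormal{WE}(\sym{1})=(1,-1,1,-1)$ and $\textnormal{WE}(\sym{0})=(0,0,1,-1)$. After normalization the relevant coordinate is a positive multiple of $\Omega - p$. The FFN is set to zero, so $\mathcal{N}_2$ preserves the sign again. The output weight $W$ then reads off that coordinate. We conclude $T^{(n)}(w)>0 \iff p<\Omega$.

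The only delicate bookkeeping is keeping the residual stream mean-zero, so that the sign of $\Omega-p$ survives both normalizations.
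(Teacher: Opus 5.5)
Your proposal is correct and is essentially the paper's proof. Uniform attention ($W^{\textnormal{Q}} = W^{\textnormal{K}} = \mathbf{0}$) computes $\#_{\sym{1}}(w)/|w|$ at the last position, and the uncomputability sits entirely in the real weight $\Omega$ used in the final comparison; the only differences are that the paper forms this comparison in the FFN as $\max(0,\Omega - x)$ rather than with a linear map or output bias, and that it does not spell out the residual and layer-normalization bookkeeping you handle with the mean-zero pairing trick.
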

\begin{proof}
We construct a transformer with one layer. The self-attention attends uniformly ($W^{\textnormal{Q}} = W^{\textnormal{K}} = \mathbf{0}$).  The values are $\mathbb{I}[w_i = \sym{1}]$.
The FFNN computes
$\mathit{ffn}(x) = \max(0, \Omega-x)$.
The transformer accepts the string if $\mathit{ffn}(x) > 0$, which occurs if and only if $\#_{\sym{1}}(w)/\vert{}w\vert{} < \Omega$.
\end{proof}

\section{Proof of \cref{prop:hard-coded-index}}
\label{sec:hard-coded-index-proof}

The key argument is that extracting a deeper bit requires the output to alternate increasingly many times as $x$ varies. We first bound how many connected components a fixed transformer can produce.

\begin{lemma}
\label{lem:hard-coded-components}
Let $T$ be an untrained transformer. For any parameter values $\paramvals$, length $n$, and sequence $v \in \R^{(n-1)}$, the set \[
S_{\paramvals,v}^{(n)}
=
\left\{
x\in[0,1):
T_\paramvals^{(n)}
\!\left(
x, v
\right)>0
\right\}.
\] has $2^{O(n^4)}$ connected components. %
\end{lemma}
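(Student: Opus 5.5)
We view $g(x) := T_\paramvals^{(n)}(x,v)$ as a function of the single real variable $x\in[0,1)$, with $\paramvals$ and $v$ fixed. The boundary points of $S_{\paramvals,v}^{(n)}$ in $(0,1)$ are points where $g$ changes sign. Also, $g$ is continuous. So the number of connected components of $S_{\paramvals,v}^{(n)}$ is at most one more than the number of connected components of the zero set of $g$, as long as that zero set is finite. We will arrange that finiteness by perturbing to a regular level set. The plan therefore reduces to counting zeros of piecewise exponential--linear functions of one variable.

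\emph{Step 1: remove $\max$.} As in the proof of \cref{thm:transformer_vc} (\cref{sec:transformer_vc_proof}), we fix a sign pattern $\alpha\in\{0,1\}^{L\cdot nd}$ for the ReLU arguments. This gives $C^\infty$ functions $f_\alpha(x)$ and $\tau^\alpha_{i}(x)$. On the set where the pattern $\alpha$ is realized, $g=f_\alpha$. A point where $S_{\paramvals,v}^{(n)}$ changes membership is therefore a zero of some $f_\alpha$ or of some ReLU argument $\tau^\alpha_i$. There are $s\in 2^{O(n)}$ such functions, so it suffices to show that any one of them has at most $2^{O(n^4)}$ sign changes (or at most that many components of its positivity set). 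This costs only a factor $2^{O(n)}$.

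\emph{Step 2: regular level sets.} Fix one such $C^\infty$ function $h(x)$. Its zero set may be degenerate, for instance a whole interval, or not a $0$-dimensional submanifold. By Sard's theorem (\cref{thm:sard}), almost every $\epsilon$ is a regular value of $h$, and by \cref{thm:regular-level-set} $h^{-1}(\epsilon)$ is then a $0$-dimensional submanifold, i.e.\ a discrete set. Each sign change of $h$ at $x_0$ between two points $x_-<x_0<x_+$ with $h(x_-)<0<h(x_+)$ persists for all small $\epsilon$. So if $h$ had $N$ sign alternations we could pick a regular $\epsilon$ small enough that $h-\epsilon$ still has $N$ alternations, and hence $h^{-1}(\epsilon)$ has at least $N$ points. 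We then introduce the auxiliary variables exactly as in \cref{sec:transformer_vc_proof}: $z^{\mathrm{s}}_{i,j}$, $z^{\mathrm{a}}_{i,j}$ for softmax, fourth roots for layer norm, and quotient variables. This gives a system of $1+\kappa$ exponential--linear equations in $1+\kappa$ variables, with $\kappa\in O(n^2)$, degree at most $4$, and $q\in O(n^2)$ exponentials of linear arguments. By the analogues of claims (i)--(iii), its solution set is homeomorphic to $h^{-1}(\epsilon)$ and is therefore a $0$-dimensional submanifold. \Cref{Khovanskii_theorem} with $k=r=1+\kappa$ then bounds the number of points by $B$ with $\log B\in O(q^2+M\log M)=O(n^4)$.

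\emph{Main obstacle.} The delicate step is Step 2. We must ensure the homeomorphism in claim (iii) preserves the submanifold property of the lifted set, which holds because the lift is the graph of a smooth map. We must also ensure that the perturbation by $\epsilon$ does not destroy the sign-change count. I would handle this by working with finitely many witnesses $x_-<x_+$ for each alternation and choosing $\epsilon$ below all the $|h(x_\pm)|$. Here it matters that $x$ is a single input coordinate, so the ambient parameter space is $\R^1$ and regularity is just $h'\neq 0$ on the level set. Combining the estimates gives $2^{O(n)}\cdot 2^{O(n^4)}=2^{O(n^4)}$ components.
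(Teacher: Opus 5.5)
Your argument is correct in substance, but it takes a different route from the paper. The paper never case-splits on ReLU activations. It encodes each $\max$ algebraically ($z\ge 0$, $z-h\ge 0$, $z(z-h)=0$), turns inequalities into equations with squared slack variables, and collapses everything into one nonnegative function $F_0=\sum_j P_j^2$. It then adds the term $(x^2+\|\bar z\|^2+t^2-R^2)^2$, so that by coercivity $F$ is bounded below by some $\delta_i>0$ on each hyperplane $x=c_i$. A regular value $0<\epsilon<\min_i\delta_i$ gives a hypersurface $F^{-1}(\epsilon)\subseteq\R^{\kappa+2}$ whose points over the $a_i$ are separated by those hyperplanes, and Khovanskii is applied with $r=1$.

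You instead pay a harmless $2^{O(n)}$ factor to split into activation patterns, which leaves finitely many smooth univariate functions, and then you exploit one-dimensionality. Sign sets in $\R$ combine additively, and the perturbation gap only has to be checked at finitely many points, so no compactifying term or slack $t$ is needed. Khovanskii is then applied to a square system with $k=r=1+\kappa$ that counts isolated points. A side benefit: the auxiliary equations have a triangular Jacobian in $\bar z$ with nonvanishing diagonal ($1$, $\sum_\ell e^{z^{\mathrm s}_{i,\ell}}$, $4z^3$, $z^2$). So regularity of $\epsilon$ for $h$ makes the lifted system genuinely nondegenerate. The paper's route buys a uniform algebraic treatment of $\max$ with no sign-pattern bookkeeping.

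Two details need tightening. First, ``sign changes'' is the wrong quantity. For $g(x)=(x-\tfrac12)^2$ the set $S$ has two components, yet $g$ never changes sign, so boundary points of $S$ need not be sign changes. Count instead the components of $\{f_\alpha>0\}$, $\{\tau^\alpha_i\ge 0\}$ and $\{\tau^\alpha_i<0\}$. In $\R$, both intersections and unions of sets with $c_1,\dots,c_m$ components have at most $\sum_j c_j$ components. Hence $S=\bigcup_\alpha(C_\alpha\cap\{f_\alpha>0\})$ is controlled, where $C_\alpha$ is the set on which pattern $\alpha$ is realized. Correspondingly, in Step 2 take $a_i$ in distinct components of $\{h>0\}$ and $c_i$ between them with $h(c_i)\le 0$. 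Choose a regular value $\epsilon\in(0,\min_i h(a_i))$, and a suitable negative $\epsilon$ for the other sign sets. Do not choose $\epsilon$ below all $|h(x_\pm)|$, since $h(c_i)$ may be $0$.

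Second, $z^4=v$ has two real roots, so the lift is a finite-to-one cover of $h^{-1}(\epsilon)$ rather than a homeomorphic copy; the paper's claim (ii) has the same slip. The lift is still discrete and surjects onto $h^{-1}(\epsilon)$, which is all the upper bound requires.
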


\begin{proof}
Fix $n$, $\paramvals$, and $v$. 
We want to apply \cref{Khovanskii_theorem} directly to bound the number of connected components of $S_{\paramvals,v}^{(n)}$, so we need to rewrite the transformer in a suitable form. 

First, we need to eliminate square roots and divisions, and to ensure that the argument of every $\exp$ is linear, just as in the proof of \cref{thm:transformer_vc}.
Second, we need to eliminate $\max$, and third, we need to replace inequalities with equalities.

For each ReLU $z=\max(0,h)$, use the equivalent conjunction
\[
z\geq 0,\qquad z-h\geq 0,\qquad z(z-h)=0.
\]
Then for each inequality, introduce a new real variable $u$ and use
\[
p\geq 0\ \Longleftrightarrow\ (\exists u)\ p-u^2=0.
\]
The resulting system is logically equivalent to the original transformer computation.

Let $\bar z$ denote the tuple consisting of all variables. We obtain exponential--linear polynomials $P_1(x,\bar z),\ldots,P_s(x,\bar z)$ such that
\begin{equation}
\label{eq:hard-coded-projection}
T_\paramvals^{(n)}\!\left(x, v\right)>0
\quad\Longleftrightarrow\quad
(\exists\bar z)\ \bigwedge_{i=1}^{s} P_i(x,\bar z)=0.
\end{equation}
For a fixed transformer, the above representation has $O(n^2)$ computation variables and constraints, $O(n^2)$ exponential terms, and polynomial degrees bounded by $O(1)$. The replacement of inequalities with equations adds $O(n^2)$ variables and equations, without introducing new exponential terms. 
Thus, let $\numnewvars=|\bar z|$, let $r$, $q$, and $D$ denote respectively the number of equations, the number of exponential terms, and the maximum degree of the exponential--linear polynomials $P_i$. Then we have
\[
\numnewvars,r,q\in O(n^2),\qquad D\in O(1).
\]
To apply \cref{Khovanskii_theorem}, the common solution set in \cref{eq:hard-coded-projection} must be a smooth manifold, but this may not be the case. For example, the equations used above to encode a ReLU relation $z=\max(0,h)$ become
\[
z-u^2=0,\qquad z-h-w^2=0,\qquad z(z-h)=0.
\]
Their solution set is the union of the two curves
\[
\{(-w^2,0,0,w):w\in\mathbb R\}
\quad\text{and}\quad
\{(u^2,u^2,u,0):u\in\mathbb R\},
\]
where the coordinates are ordered as $(h,z,u,w)$. These curves meet at the origin with distinct tangent directions, so there is no single tangent line to the solution set there. Thus the solution set is not a smooth manifold at the origin. 
To get around this issue, we construct a nearby regular level set, which, by the regular level set theorem, will allow us to apply \cref{Khovanskii_theorem}. Then $\epsilon$ is a regular value of $F$, and the regular level set theorem implies that $F^{-1}(\epsilon)$ is a smooth sub-manifold of $\mathbb{R}^{\numnewvars+2}$ of co-dimension one, and hence of dimension $\numnewvars+1$.

We begin by combining the equations into a single nonnegative exponential--linear polynomial
\[
F_0(x,\bar z)
=
\sum_{j=1}^{s} P_j(x,\bar z)^2.
\]
Since every term is nonnegative,
\[
F_0(x,\bar z)=0
\quad\Longleftrightarrow\quad
P_1(x,\bar z)=\cdots=P_s(x,\bar z)=0.
\]
Choose any finite number $N\geq 2$ of points $a_1<\cdots<a_N$ in distinct connected components of $S_{\paramvals,v}^{(n)}$. 
For each $i=1,\ldots,N-1$, choose $c_i\in(a_i,a_{i+1})$ not in $S_{\paramvals,v}^{(n)}$. By \cref{eq:hard-coded-projection}, each $a_i$ has a tuple $\bar z_i$ satisfying \[
P_1(a_i,\bar z_i)=\cdots=P_r(a_i,\bar z_i)=0,
\] whereas no such tuple exists at any $c_i$.

The next step is to write a modified function $F$ and choose $\epsilon$ so that the set $F^{-1}(\epsilon)$ does not contain any point whose $x$ coordinate equals to any of the intermediate values $c_i$. So we need a strictly positive lower bound for $F$ when $x=c_i$. We therefore modify $F_0$ as follows:
Choose $R>0$ such that $a_i^2+\|\bar z_i\|^2<R^2$ for all $i$.  Introduce one more real variable $t$, and define
\begin{equation}
\label{eq:hard-coded-regularization}
F(x,\bar z,t)
=
\sum_{i=1}^{s} P_i(x,\bar z)^2
+
\bigl(x^2+\|\bar z\|^2+t^2-R^2\bigr)^2.
\end{equation}
The variable $t$ is a slack variable: the equation
$x^2+\|\bar z\|^2+t^2=R^2$ is solvable in $t$ exactly when
\
$x^2+\|\bar z\|^2\le R^2$. Thus, at level zero, the final squared term restricts $(x,\bar z)$ to the closed ball of radius $R$ about the origin while expressing this restriction using an equation instead of an inequality.

This resulting function has three properties we need. First, for each $a_i$ there are values $\bar z_i$ and $t_i$ such that $F(a_i,\bar z_i,t_i)=0$. Second, for each $c_i$, we have that $\delta_i
:=
\min_{\bar z,t} F(c_i,\bar z,t)
>0$, meaning the function $F(c_i,\bar z,t)$ is bounded away from zero (while $F_0(c_i,\bar z)$ was not, as $\bar z$ does not necessarily range over a compact domain). Finally, $F$ is still an exponential--linear polynomial, with the same exponential terms as the $P_i$ and degree $O(1)$.

Now, chose a value $0<\epsilon<\min_{1\leq i\leq N-1}\delta_i$ and inputs $x \in \R$ , $\bar{z} \in \R^{\numnewvars}$, and $t \in \R$ such that $F(x, \bar{z}, t) = \epsilon$ and $\nabla F(x,\bar{z},t)\neq 0$ (in other words, $\epsilon$ is a regular value of $F$).'' Such a choice is possible by Sard's theorem. Then $F^{-1}(\epsilon)$ is a smooth manifold. Since $F(a_i,\bar z_i,t_i)=0$ and $F(a_i,\bar z_i,t)\to\infty$ as $|t|\to\infty$, continuity shows that $F^{-1}(\epsilon)$ contains a point with $x=a_i$ for every $i$.
On the other hand, by the choice of $\epsilon$, it contains no point with $x=c_i$. Hence these $N$ points lie in distinct connected components of $F^{-1}(\epsilon)$.

The equation $F-\epsilon=0$ is in $k=\numnewvars+2$ variables: the $\numnewvars$ variables in $\bar z$, together with $x$ and the additional variable $t$. The equation $F-\epsilon=0$ has $k=\numnewvars+2$ variables and $r=1$ equation,
so
\[
\dim F^{-1}(\epsilon)=\numnewvars+1=k-r.
\]
We can now apply \cref{Khovanskii_theorem} to the equation $F-\epsilon=0$. It gives at most
\[
2^{O(q^2+k\log k)} \in 2^{O(n^4)}
\]
connected components. The implied constant depends only on the fixed transformer $T$.
Hence,
\[
N \ino 2^{O(n^4)}
\]
and the number of connected components must be at least $N$.
The cases of zero or one component are trivial.
\end{proof}
\begin{proof}[Proof of \cref{prop:hard-coded-index}]
Suppose that $T_\paramvals$ solves the $b(n)$-th-bit index task. Fix any $v$.
On the intervals
\[
\bigl(j2^{-b(n)},(j+1)2^{-b(n)}\bigr),
\qquad j=0,\ldots,2^{b(n)}-1,
\]
the $2^{-b(n)}$ bit alternates between $0$ and $1$. Hence
$S_{\paramvals,v}^{(n)}$ has at least $2^{b(n)-1}$ connected components.
By the preceding lemma,
\[
2^{b(n)-1} \ino 2^{O_T(n^4)}.
\]
Thus $b(n)\in O_T(n^4)$. Thus, the proposition holds.
\end{proof}

\end{document}